\documentclass[sigconf, nonacm]{acmart}

\usepackage{enumitem}
\usepackage{tabularx}
\usepackage{multirow}
\usepackage[ruled, linesnumbered]{algorithm2e}
\def\BibTeX{{\rm B\kern-.05em{\sc i\kern-.025em b}\kern-.08em
    T\kern-.1667em\lower.7ex\hbox{E}\kern-.125emX}}
\usepackage{hyperref}
\hypersetup{
    colorlinks,
    linkcolor={red!50!black},
    citecolor={blue!50!black},
    urlcolor={blue!80!black}
}

\begin{document}
\title{GFS: Graph-based Feature Synthesis for Prediction over Relational Databases}

\author{Han Zhang$^{1}$, Quan Gan$^{2}$, David Wipf$^{2}$, Weinan Zhang$^{1}$}
\affiliation{
  \institution{$^{1}$Shanghai Jiao Tong University, $^{2}$AWS Shanghai AI Laboratory}
}
\email{han.harry.zhang@gmail.com, {quagan, daviwipf}@amazon.com, wnzhang@sjtu.edu.cn}
\newcommand{\david}[1]{{\color{red} [David: ``#1'']}}
\newcommand{\han}[1]{{\color{blue} [Han: ``#1'']}}
\newcommand{\quan}[1]{{\color{olive} [Quan: ``#1'']}}
\newcommand{\weinan}[1]{{\color{magenta} [Weinan: ``#1'']}}
\newcommand{\minjie}[1]{\color{yellow} [Minjie: ``#1'']}

\newcommand{\model}{{GFS}}

\newtheorem{remark}{Remark}

\def\header{\vspace{2.5mm} \noindent}

\begin{abstract}
Relational databases are extensively utilized in a variety of modern information system applications, and they always carry valuable data patterns. There are a huge number of data mining or machine learning tasks conducted on relational databases. However, it is worth noting that there are limited machine learning models specifically designed for relational databases, as most models are primarily tailored for single table settings. Consequently, the prevalent approach for training machine learning models on data stored in relational databases involves performing feature engineering to merge the data from multiple tables into a single table and subsequently applying single table models. This approach not only requires significant effort in feature engineering but also destroys the inherent relational structure present in the data.  To address these challenges, we propose a novel framework called Graph-based Feature Synthesis (\model{}). \model{} formulates the relational database as a heterogeneous graph, thereby preserving the relational structure within the data. By leveraging the inductive bias from single table models, \model{} effectively captures the intricate relationships inherent in each table. Additionally, the whole framework eliminates the need for manual feature engineering. In the extensive experiment over four real-world multi-table relational databases, \model{} outperforms previous methods designed for relational databases, demonstrating its superior performance.
\end{abstract}

\maketitle
\pagestyle{plain}
\begingroup
\renewcommand{\thefootnote}{}
\footnote{\noindent \emph{Preprint}, under review}
\addtocounter{footnote}{-1}
\endgroup

\section{Introduction}

Data mining originates from the tasks of mining useful patterns from databases. \emph{Column prediction}, where a model is trained to predict the values in a certain \emph{target column} in a single \emph{target table} of the database, is an important task. A wide range of applications, such as click-through-rate prediction\cite{guo2017deepfm, cheng2016wide, zhou2018deep, guo2022miss}, anomaly detection\cite{han2022adbench, liu2008isolation, zhao2018xgbod, tang2022rethinking}, frequent pattern mining \cite{han2007frequent, agrawal1994fast,han2000mining}, etc., can all be formulated as column prediction.

However, almost all previous data mining works focus the scope of column prediction on a single table. If there are multiple tables and a schema, a common approach is to first merge these tables into a single one, and then perform machine learning methods given the merged single table \cite{park2022end, rendle2010factorization,qu2018product}.
These approaches rely on manually joining these tables into a single table as a feature engineering step. Executing manual feature engineering necessitates a significant amount of effort and substantial domain expertise. An often-cited statistic is that data scientists spend 80\% of their time integrating and curating their data \cite{create2020}. Moreover, it can destroy the inherent relational structure embedded within the data, potentially leading to a loss of valuable information. The need and interest in mining information from various domains of relational databases have been growing, with a significant emphasis on reducing human effort and minimizing information loss.

To this end, we propose a general novel framework called Graph-based Feature Synthesis (\model{}).
\model{} is an embedding update and prediction framework that can plug in any differentiable model designed for single table settings 
as base model or embedding function to finalize the specific model.
By leveraging the inductive bias from single table models, \model{} effectively captures the intricate relationships inherent in each table. \model{} formulates the relational database as a heterogeneous graph, thereby preserving the relational structure within the data. Additionally, the whole framework eliminates the need for manual feature engineering.

Some existing solutions address a similar issue, yet they possess certain shortcomings that can be remedied by the design of the \model{} framework.
Automated methods represent a category of techniques that consolidate multiple tables into a single table using predefined rules \cite{kanter2015deep, lam2017one, arda, liu2022feature, galhotra2023metam}. DFS (Deep Feature Synthesis) \cite{kanter2015deep}, officially implemented by Featuretools \cite{featuretools}, is the most remarkable representative of the offline methods. %
It utilizes depth-first search to aggregate or join columns from other tables to the target table. %
Despite its conciseness, DFS has several inherent problems to be addressed.
(1) DFS only offers rule-based aggregation functions such as \textit{mean}, \textit{max}, and \textit{min} for continuous features, or \textit{mode} for categorical features. Such a limited set of aggregation functions reduces the overall effectiveness of the method and results in the loss of information during the aggregation process.
(2) The search process employed by DFS relies on depth-first search, making the results sensitive to the order in which the children tables are traversed for a given parent table.  As we show in Section~\ref{sec:property}, it may ignore some join paths that could be crucial for the downstream task, even if the search depth is increased.
(3) DFS aggregates every column of the children table using several aggregation functions and appends it to the parent table. Consequently, the number of columns appended to the parent table is multiplied by the number of aggregation functions used. This results in an exponential increase in the number of columns when the depth of the search is deep or when the children table contains a large number of columns.

Recently, methods that try to convert the database into a graph and apply Graph Neural Networks (GNNs) are also developed. When making prediction of a single data instance, GNNs are able to use information of other related data instances.  RDB2Graph (Relational Database to Graph) \cite{cvitkovic2020supervised} is one of the most representative approaches, which constructs a graph by %
treating each row as a node, each foreign key reference as an edge, and running a GNN on top of the raw input features.  %
However, RDB2Graph has several drawbacks:
(1) RDB2Graph solely relies on an MLP to form the initial node embeddings for each row, which limits its ability to explore the interaction patterns among columns effectively. In contrast, there exist a number of single table models that excel in capturing such column interactions \cite{rendle2010factorization,guo2017deepfm,qu2018product}.
(2) The GNN model utilized by RDB2Graph is not specifically designed for tabular data, and its simplicity may result in suboptimal performance when handling relational databases.
(3) GNN models are notoriously susceptible to an over-smoothing problem when the number of layers becomes too large. However, in complex relational databases, capturing information from all tables necessitates constructing a substantial number of layers, exacerbating this problem.

In this paper, we propose a novel framework called Graph-based Feature Synthesis (\model{}) which offers several advantages over existing approaches.  For example, in comparison to DFS, \model{}  introduces several key improvements:
(1) Learnable aggregation functions: Instead of being limited to rule-based aggregation functions, \model{} enhances the capability by incorporating parameterized learnable functions, which provide increased flexibility and power in capturing complex relationships.
(2) Removing sensitivity to traversal order: \model{} is not affected by the order in which tables are traversed, which ensures more consistent and reliable results regardless of the sequence in which traversal occurs.  In contrast, we prove that DFS is \textit{not} similarly invariant.
(3) Complete coverage of all join paths: Unlike DFS, \model{} is capable of covering all join paths within a specified number of hops, which means that no important traversal path will be missed, enabling a comprehensive modeling of the relational structure. 
(4) Controlled column growth: Rather than appending every column from child tables, \model{} simply appends the row embedding of each row to the parent table. This significantly alleviates the issue of exponential column growth, yielding a more manageable and efficient representation.

Turning to RDB2Graph, \model{} also exhibits several notable improvements:
(1) Enhanced row embeddings using powerful single table models: instead of relying solely on MLPs, \model{} incorporates well-designed and powerful single table models to generate row embeddings. %
(2) Improved prediction using advanced single table models: we utilize powerful single table models for the final prediction step in \model{}. %
(3) Input residual connections to mitigate over-smoothing \cite{alon2020oversquashing,chen2020oversmoothing} to address the over-smoothing problem commonly associated with GNNs, we incorporate residual connections between the raw node features and every layer. %

To sum up, the main contributions of this paper are threefold:
\begin{itemize}[leftmargin=*]
    \item After illustrating the need for models specifically designed to handle \emph{column prediction tasks} within relational databases, we propose the novel and generic \model{} framework. Based on modular components, \model{} can incorporate any
    differentiable model designed for single table settings as an embedding function and/or prediction head.  In so doing we can equip \model{} with the strong inductive biases of existing single table models, and benefit from future advances in this field.
    \item Relative to alternative representative paradigms in the mold of DFS and RDB2Graph, \model{} offers targeted improvements, such as invariance to traversal order, greater expressiveness, and oversmoothing mitigation.
    \item Experiments conducted on four real-world relational datasets demonstrate that \model{} outperforms existing methods, while ablation studies validate the importance of each \model{} component.
\end{itemize}

The remainder of the paper is organized as follows. Section \ref{sec:related_works} first introduces more related work, and Section \ref{sec:problem_definition} then introduces the formal task definition along with background terminology and notation.  Next, Section \ref{sec:model} presents our detailed \model{} framework, followed by Section \ref{sec:property} which provides theoretical comparisons between \model{} and two key baseline alternatives, DFS and RDB2Graph. Finally, in-depth experiments and conclusions are presented in Sections \ref{sec:experiments} and \ref{sec:conclusion} respectively.

\section{Related Works}
\label{sec:related_works}
There is a significant demand in the industry for prediction tasks on relational databases \cite{netz2000integration, park2022end, aggarwal2012analysis, shahbaz2010data, fernandez2018aurum}.
However, it is true that there has been relatively limited research specifically focusing on prediction in the context of relational databases.

\header
\textbf{Feature Engineering.}
In industry, the prevailing approach for prediction tasks involves manual feature engineering on the relational database, followed by the application of a single table model on the resulting single table\cite{park2022end, rendle2010factorization,qu2018product}.
This method requires significant effort in data extraction and feature engineering. Additionally, it heavily relies on the specific task and dataset at hand, necessitating a deep understanding of domain knowledge. However, these efforts are costly and have the drawback of potentially undermining the crucial relational structure inherent in the data.

\header
\textbf{Automatic Feature Engineering.}
In the field of automating feature engineering in relational databases, several notable works have been introduced, including DFS \cite{kanter2015deep}, OneBM (One Button Machine) \cite{lam2017one}, R2N \cite{lam2018neural}, ARDA \cite{arda}, METAM \cite{galhotra2023metam} and AutoFeature \cite{liu2022feature}. DFS \cite{kanter2015deep}, being the most widely used and open-source method, employs depth-first search to aggregate or join columns from other tables to the target table using rule-based aggregation functions. However, as mentioned in the introduction, DFS has certain limitations that we have addressed. 
OneBM shares similarities with DFS as it also utilizes rule-based aggregation functions. However, it improves upon DFS by enumerating the traversal paths, thereby reducing the variance issue to some extent. On the other hand, R2N extends the rule-based aggregation function to LSTM \cite{hochreiter1997long}. ARDA is another system that automates the data augmentation workflow and uses feature selection. AutoFeature arguments feature from candidate tables to the target table following an exploration-exploitation strategy with a reinforcement learning-based framework. METAM is a novel goal-oriented framework that queries the downstream task with a candidate dataset, forming a feedback loop that automatically steers the discovery and augmentation process.

\header
\textbf{Graph-based Methods.}
In the introduction, we analyze RDB2Graph, which represents a preliminary attempt at utilizing GNNs for prediction tasks in relational databases. GNNs are neural networks specifically designed for data structured as graphs. Typically, GNN architectures involve three main steps: node embedding initialization, message passing, and readout.
In the node embedding initialization step, each node is assigned an initial node embedding vector. These initial embeddings capture the initial representations of the nodes.
The message-passing step involves nodes utilizing their own node embedding vectors, as well as those from their neighboring nodes, to update their own node embeddings. This allows information to be propagated and aggregated across the graph.
The readout part employs the final node embeddings to perform predictions, with the specific approach depending on the downstream task.
In the context of a relational database, it is natural and intuitive to consider the database as a heterogeneous graph. This perspective allows for the application of GNN architectures designed specifically for heterogeneous graphs. Several popular examples of such architectures are the RGCN (Relational Graph Convolution Network) \cite{schlichtkrull2018modeling}, attention-based RGAT (Relational Graph Attention Network) \cite{busbridge2019relational}, transformer-based HGT (Heterogeneous Graph Transformer) \cite{hu2020heterogeneous}. 
Furthermore, Cvetkov et al. \cite{cvetkov2023relational} propose a knowledge graph method that utilizes tables alongside the target table to enhance the features of the target table. However, the results presented in that paper demonstrate that DFS performs best in most datasets, remaining a highly effective baseline. Additionally, ATJ-Net\cite{bai2021atj} constructs hypergraphs to fuse related tables and use architecture search to prone the model.

\header
\textbf{Single table models.}
While models specifically designed for relational databases are comparatively rare, numerous models have been proposed for single-table settings. Among these, tree-based models such as Gradient Boosting Decision Trees (GBDT) \cite{friedman2001greedy} and Factorization Machines (FM) \cite{rendle2010factorization} have gained significant attention. Variants of FM, including DeepFM \cite{guo2017deepfm} and Wide \& Deep \cite{cheng2016wide}, are particularly popular in industrial applications and other feature interaction model such as ARM-Net\cite{cai2021arm}, DCN-V2\cite{wang2021dcn}. Graph-based models such as EmbDI\cite{create2020}, TabularNet\cite{du2021tabularnet}. Furthermore, transformer-based single table models have also recently emerged, such as FT-Transformer \cite{gorishniy2021revisiting}, TabTransformer \cite{huang2020tabtransformer}, TURL \cite{deng2022turl} and RCI \cite{glass2021capturing}.

\begin{figure}
    \centering
    \includegraphics[width=\linewidth]{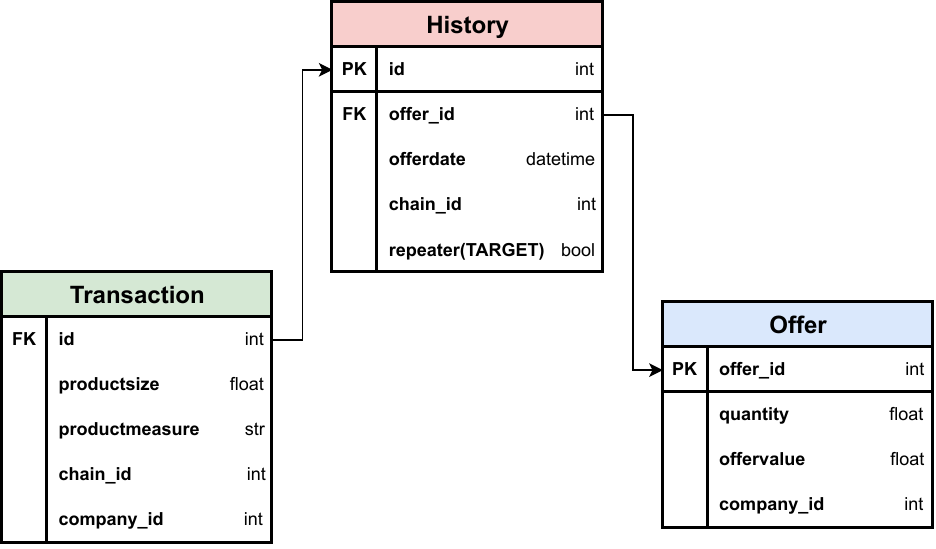}
    \caption{Example schema from AVS dataset. PK stands for primary key, FK for foreign key, rest are other columns.}
    \label{fig:AVS}
\end{figure}

\section{Problem Definition}\label{sec:problem_definition}

\begin{table*}
    \caption{The mapping from a relational database to a graph.}
    \centering
    \begin{tabularx}{0.9\textwidth}{p{8cm}l}
        \toprule
        \textbf{Relational Database} & \textbf{Graph} \\
        \midrule
        Row & Node \\
        Foreign key reference from $X^{A}_{u,i}$ to $X^{B}_{v,j}$ & Directed edge from node $u$ of type $A$ to node $v$ of type $B$ \\
        Table & Node type \\
        Foreign key column & Edge type \\
        Non-Foreign key or Primary key column & Node raw feature \\
        \bottomrule
    \end{tabularx}
    \label{tab:rel2graph}
\end{table*}

Let $D=\{X^1, X^2, \ldots , X^N\}$ denote a relational database, where each $X^n$ represents the $n$-th consituent table. An example is shown in Fig. \ref{fig:AVS}.
The $i$-th row and $j$-th column of table $X^n$ are denoted as $X^{n}_{i,:}$ and $X^{n}_{:, j}$ respectively. In this way, $X^{n}_{i,j}$ represents the entry value of row $i$ and column $j$ of table $X^n$. We use $C_n$ and $R_n$ to denote the number of columns and rows in $X^n$, respectively. Proceeding further, two tables relate to each other in one or two ways: \textit{forward} or \textit{backward}, similar to \cite{kanter2015deep}, as described next.

\header
\textbf{Forward}: Suppose that there is a foreign key reference from $X^{A}_{:,i}$ to $X^{B}_{:,j}$ and $X^{B}_{:,j}$ is the primary key in $X^B$. Then a \textit{forward} relationship is from one row $X^{A}_{m, :}$ to another row $X^{B}_{l, :}$ since $X^{A}_{m, i}$ has foreign key reference to $X^{B}_{l, j}$, i.e., $X^{A}_{m, i} = X^{B}_{l, j}$  
We can state that $X^A_{m,:}$ has forward relationship to $X^B_{l,:}$ or table $X^{A}$ has a forward relationship to $X^{B}$, and actually each foreign column in $X^{A}$ represents one type of forward relationship. In the example shown in Fig.~\ref{fig:AVS}, the table \textit{History} has a forward relationship to table \textit{Offer}, where each row in \textit{History} represents a customer, and one customer only refers to one offer.

\header
\textbf{Backward}: A \textit{backward} case refers to the relationship from one row $X^{B}_{l, :}$ to all the rows $\{X^{A}_{m_1, :}, X^{A}_{m_2, :}, \ldots ,X^{A}_{m_n, :}\}$ that have forward relationship to $X^{B}_{l, :}$ due to the foreign key reference from $X^{A}_{:,i}$ to $X^{B}_{:,j}$. In the same example as above, the table \textit{Offer} has a backward relationship to \textit{History}, and many customers can have the same offer.

With all the necessary terminology and notations now clearly defined, we proceed to introduce the specific task that this paper addresses. A wide range of data mining problems over a relational database $D$ can be formulated as a \emph{column prediction} task, which will be our focus. We formulate such problems as follows: 
\begin{definition}[Column Prediction Task on Relational Database] \label{def:basic_problem}
    Given a relational database $D$ and all the relationships between tables, predict the values in a target column $X^{T}_{:, \text{target}}$ of a \emph{target table} $X^{T} \in D$ of interest using relevant information available in the database.%
\end{definition}

\section{The \model{} Framework}\label{sec:model}

\begin{figure*}
    \centering
    \includegraphics[width=\linewidth]{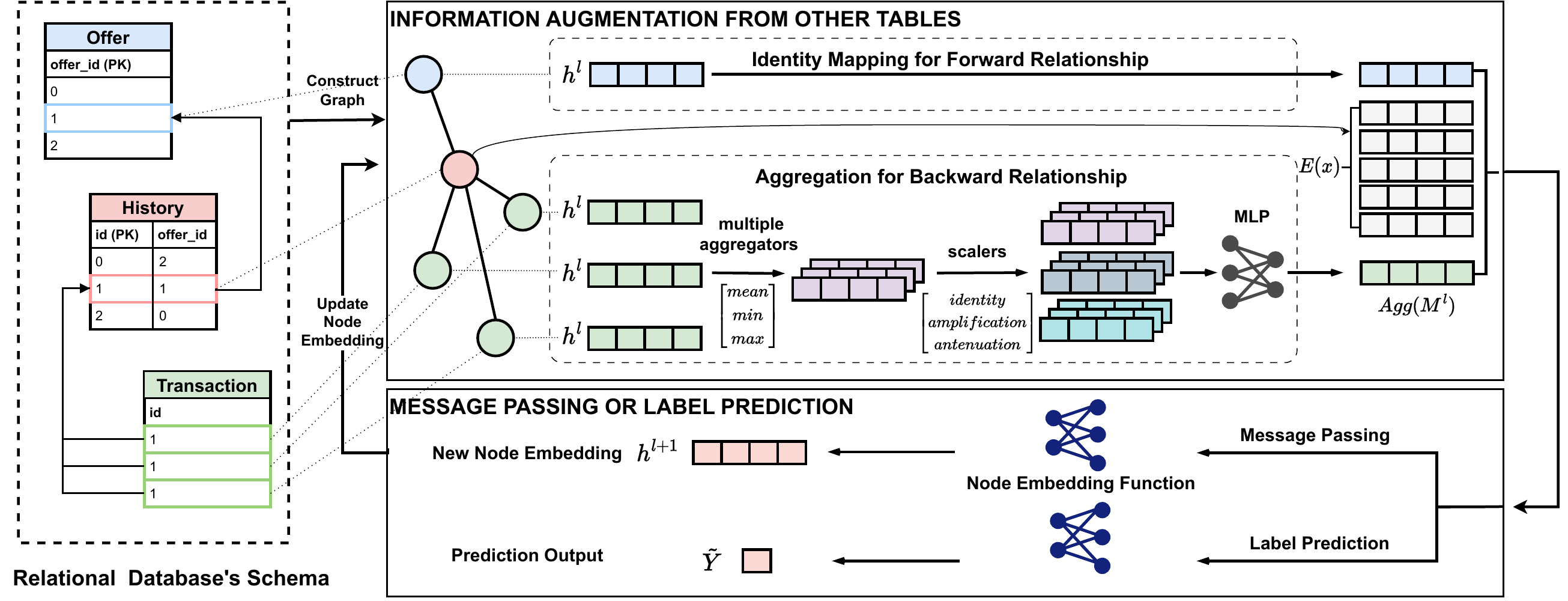}
    \caption{Overview of the \model{} framework. We use the node highlighted in red as an example to demonstrate how the node in the graph updates its node embedding and how predictions are made if the node is the target node. Some notations are abbreviated and some columns in tables are not shown for simplicity.}
    \label{fig:GFS}
\end{figure*}
We now present the technical details of the proposed \model{} framework designed to tackle the column prediction problem specified by Definition \ref{def:basic_problem}.  To this end, we first describe the process of converting a relational database to a graph, the nodes of which have learnable embeddings.  After discussing the initialization of these embeddings, we then proceed to the core training and inference steps, namely, message passing and label prediction.  The overview of \model{} is shown in Fig.~\ref{fig:GFS}, with the associated pseudo-code in Algorithm~\ref{algo:GFS} below.

\subsection{Interpreting a Relational Database as a Graph}

A relational database can be interpreted as a heterogeneous directed graph, with the correspondences depicted in Table \ref{tab:rel2graph}.  We consider each row of a table as a separate node in the entire graph, and all rows within a table are treated as nodes of the same type. If there is a foreign key reference from $X^{A}_{u,i}$ to $X^{B}_{v,j}$, it implies the existence of a directed edge from node $u$ of type $A$ to node $v$ of type $B$. 
Additionally, foreign key references within the same foreign key column are considered as edges of the same type. For instance, in the schema example of Fig.~\ref{fig:AVS}, all foreign key references from the table \textit{Transaction} to the table \textit{History} will be categorized as the same edge type.

After the heterogeneous directed graph is constructed, we then add reverse edges for each edge in order to let the whole graph become a heterogeneous undirected graph which is more suitable for aggregating the information from other tables to the target table.

\header
\textbf{Node (Row) Raw Features.} The values of a row in columns other than primary and foreign keys are regarded as node(row) raw features. Examples include the values in the \textit{quantity, offervalue, company\_id} columns from table \textit{Offer} in Fig. \ref{fig:AVS}. We denote the raw node features associated with example row $X_{i,:}^n$ as 
\begin{equation}
    x=[X_{i,1}^n, X_{i,2}^n, \ldots, X_{i,C_n}^n]~.
\end{equation}

\header
\textbf{Node (Row) Embedding.} The low-dimensional vector which encodes node-wise information is defined as a node embedding. We denote the node embedding associated with $X_{i,:}^n$ as $h(i, n)\in \mathbb{R}^d$, 
where $n$ references the table $X^n$ and $i$ represents the corresponding intra-table row.

\subsection{Embedding Initialization}
We allow for three types of data in our framework, namely, \textit{categorical}, \textit{continuous}, and \textit{date}. To facilitate follow-up calculations, raw node features of each type are first transformed into dense real-valued embedding vectors.

\header
\textbf{Categorical Values.} 
Categorical values $c_i\in \mathbb{Z}$ are embedded into $e_i\in \mathbb{R}^{d}$ via a column-wise look-up table, mapping each categorical value to a vector. And different columns that share the same semantic meaning will use the same look-up table. In this way, in Fig.~\ref{fig:AVS} the same categorical value $c_i$ from the \textit{chain\_id} column in table \textit{Transaction} and from the table \textit{History} will share the same embedding vector $e_i$.

\header
\textbf{Continuous Values.} 
For a continuous value $c_i\in \mathbb{R}$ within a given column, we first normalize to the interval $[0, 1]$.  We then apply the transform 
\begin{equation}
    e_i = \text{ReLU}(c_i\times w_f+b_f) \in \mathbb{R}^{d}~,
\end{equation}
where $w_f, b_f\in \mathbb{R}^{d}$ are the weight and bias of a linear transformation layer in column $f$, and all the continuous values in one column share the same weights and biases.

\header
\textbf{Dates.}
We encode one date column into four columns, namely \textit{year, month, day of month, day of week}, and treat the values in each column as categorical.

\header
In aggregate across feature types, we convert the example row $X^{n}_{i,:}$'s raw features $x$ into dense real-valued embedding vectors as 
\begin{equation}
    E_{\theta}(x)\in \mathbb{R}^{C_n\times d}~,
\end{equation}
where $\theta$ represents the learnable parameters of the embedding process.

\subsection{Message Passing}\label{sec:message_passing}
We introduce message passing steps by taking the node corresponding to row $i$ in table $X^n$ as an example and use $x$ to denote the row raw feature as above. We define $F=\{X^{f_1}, X^{f_2},\ldots, X^{f_{|F|}}\}$ for all the tables $X^{n}$ has forward relationship to. Analogously we define $B=\{X^{b_1}, X^{b_2},\ldots, X^{b_{|B|}}\}$ for backward relationships. Note that both sets can have repetitive elements since $X^n$ can have multiple types of forward or backward relationships with one table.

\header
\textbf{Node (Row) Embedding Function.}
After transforming the node raw feature $x$ to the dense real-valued embedding vectors and gathering information from other node(will explain later), we adopt the node embedding function
\begin{equation}
    N_{\phi}: \mathbb{R}^{(C_n+|F|+|B|)\times d}\rightarrow \mathbb{R}^{d}
\end{equation}
to obtain the node embedding, where 
$\phi$ represents trainable parameters and the input dimension will be 
 further explained in the Message Passing Function part. The node embedding function can be similar to any single table model such as an $\text{MLP}$, $\text{FM}$~\cite{rendle2010factorization}, or $\text{FT-Transformer}$~\cite{gorishniy2021revisiting}. Additionally, nodes of the same type (rows in the same table) will share the same node (row) embedding function.

\header
\textbf{Aggregation Function. }
For the table that $X^{n}$ has a backward relationship to, we need to aggregate the node embedding of the nodes which relate to node $X^{n}_{i,:}$. This function takes a set of node embedding $M$, and the output is a vector of dimension $d$:
\begin{equation}
    \text{Agg}_{\psi}: \mathbb{M}\rightarrow\mathbb{R}^{d}~,
\end{equation}
where $\psi$ is the learnable parameter of the aggregation function $\text{Agg}_{\psi}$, $\mathbb{M}$ is the set satisfies that any set of node embedding $M\in\mathbb{M}$. Note that the aggregation function is independent of the number of node embedding in $M$ and the same aggregation function will be used for the same edge type (foreign key column).

We use an aggregation method that is similar to PNA \cite{corso2020principal}. Specifically, we calculate the \textit{mean, max, min} of the overall embedding in $M$ and multiply these three vectors three times with three different scalars: (1) \textit{identity}: $S=1$ (2) \textit{amplification}: $S\propto \log(degree)$ (3) \textit{attenuation}: $S\propto \log(degree)^{-1}$. Subsequently, we obtain nine vectors and ultimately map all these vectors into a single vector. The diagram of this aggregation process is in Fig.~\ref{fig:GFS}.

\header
\textbf{Message Passing Function.}
We will explain how the node in the whole graph gets the information from other nodes here.

First, we initialize every embedding $h$ for all the nodes as $\Vec{0}\in \mathbb{R}^d$, then we will update embedding $h$ for each node as \eqref{eq:message} and \eqref{eq:update}, the notation here is for the example node corresponding to row $i$ in table $X^n$:
\begin{align}
\text{Mes}(i, n, l):=& N_\phi\big(\text{concat}[E_\theta(x), \tilde{h}^{l}(i, f_1), \ldots , \tilde{h}^{l}(i, f_{|F|}),\label{eq:message}\\
&\text{Agg}_{\psi_1}(M^{l}(i, b_1)), \ldots,\text{Agg}_{\psi_{|B|}}(M^{l}(i, b_{|B|}))]\big)~\nonumber\\
h^{l+1}(i, n) =& \text{Mes}(i, n, l)~. \label{eq:update}
\end{align}

Repeating the above step $k$ times, we can get at most $(k-1)$-hops information. And we also define the search depth $K$ as the repeating times of above step.
\begin{itemize}[leftmargin=*]
\item We define $\tilde{h}^{l}(i, t)\triangleq h^l(i_t, t)$, and $i_t$ stands for the row number in table $X^t$ that $X^{n}_i$ has forward relationship to. $\tilde{h}^{l}(i, f_1), \ldots, \tilde{h}^{l}(i, f_{|F|})$ are $l$-step results node embedding of the rows in $X^{f_1},\ldots,X^{f_{|F|}}$ that $X^{n}_i$ has forward relationship to. 
\item $M^{l}(i, b_1),\ldots,M^{l}(i, b_{|B|})$ are $l$-step results node embedding sets of the rows in $X^{b_1},\ldots,X^{b_{|B|}}$that $X^{n}_i$ has backward relationship to. For example, $X^{b_1}_{j_1, :}, X^{b_1}_{j_2, :},\ldots,X^{b_1}_{j_m, :}$ are all the rows in $X^{b_1}$ that $X^{n}_i$ has backward relationship to, so $$M^{l}(i, b_1)=\{h^{l}(j_1, b_1), h^{l}(j_2, b_1), \ldots , h^{l}(j_m, b_1)\}$$
\item $N_\phi$ represents the node embedding function for each node, and you can also regard the node embedding function as the row embedding function for each row. Note that the dimension of $E_\theta(x)$ is $C_n\times d$, dimension of $[\tilde{h}^{l}(i, f_1), \ldots, \tilde{h}^{l}(i, f_{|F|})]$ is $|F|\times d$, dimension of $[\text{Agg}_{\psi_1}(M^{l}(i, b_1)), \ldots,\text{Agg}_{\psi_{|B|}}(M^{l}(i, b_{|B|}))]$ is $|B|\times d$ so after concatenating all these vectors, the input dimension of this function is $(C_n+|F|+|B|)\times d$.
\end{itemize}

\subsection{Label Prediction}\label{sec:label_prediction}
\header
\textbf{Base Model.}
Base Model is the model that \model{} uses for the final label prediction. Since \model{} has skip connection from the Node Raw Feature, we can use any model in the single table setting as the base model. Specifically, a base model $\pi_{\omega}$ can be written as
\begin{equation}
    \pi_{\omega}: \mathbb{R}^{(C_n+|F|+|B|)\times d} \rightarrow \mathbb{R}^{p}~,
\end{equation}
where $\omega$ is the learnable parameter of the base model, and $p$ is the dimension of the output vector.

We use the target table's node to do prediction as \eqref{eq:prediction} and \eqref{output}. The notation here is for the example node corresponding to row $i$ in table $X^n$. 
\begin{align}
\text{Pred}(i, n, l) :=& \pi_{\omega}\big(\text{concat}[E_\theta(x),\tilde{h}^{l}(i, f_1), \ldots , \tilde{h}^{l}(i, f_{|F|}),\label{eq:prediction}\\ 
&\text{Agg}_{\psi_1}(M^{l}(i, b_1)), \ldots, \text{Agg}_{\psi_{|B|}}(M^{l}(i, b_{|B|}))]\big)\nonumber \\
\tilde{Y}^n_i =& \text{Pred}(i, n, l)~. \label{output}
\end{align}
where $\tilde{Y}^n_i$ represents the output prediction vector of \model{} by the example node.

\begin{algorithm}
\caption{Graph-based Feature Synthesis}\label{algo:GFS}
\SetKwInOut{Ip}{Input}
\SetKwInOut{Op}{Output}
\Ip{All tables $X^{1:N}$; search depth $K$}
\Op{Prediction $\tilde{Y}$}
\tcp{message passing}
$h^0(i,n)\leftarrow\Vec{0}\in \mathbb{R}^d\,\,\forall n\in[1, N], i\in[0, R_n]$\\
\For{$l=1\ldots K$ 
}{
    \For{$n=1\ldots N$}{
        \For{$i=1\ldots R_n$}{
            $h^l(i, n)\leftarrow \text{Mes}(i, n, l-1)$  
        }
    }
}
\tcp{label prediction}
\For{$i=1\ldots R_{T}$}{
    $\tilde{Y}^{T}_i=\text{Pred}(i, T, K)$
}
\end{algorithm}

\subsection{\model{} Time Complexity}
In the actual computation, we will use the $K$-hop neighbor subgraph induced by the target node with the search depth $K$ to compute the prediction on the target node. 

Suppose the subgraph $\mathcal{G}$ has $N$ nodes, the maximum degree of the nodes in $\mathcal{G}$ is $\Delta$, the dimension of node embedding is $d$ as defined before, and the maximum number of columns among each table is $c$.
Considering the message passing step for each node, the aggregators(mean, max, min) will cost $O(\Delta d)$, and the MLP transformation will cost $O(d^2)$. Assume that the computation cost of node embedding function is $O(P)$ 
. Then the time complexity of updating one node's embedding is $O(P+d^2+\Delta d)$ and we need to update $K$ times.

In the label prediction step, the computational cost of aggregation is $O(\Delta d+d^2)$, and assume the computation cost of base model is also $O(P)$.Then the time complexity of doing the label prediction on the target node is $O(P+d^2+\Delta d)$.
Finally, the total time complexity is $O(KN(P+d^2+\Delta d))$ for the target node. 

To specify, if the node embedding function and base model is FT-transformer, $O(P)=O(cd^2+c^d)$, the total time complexity is $O(KN(cd^2+c^2d+\Delta d))$. And actually the order of magnitude of the computation cost for FT-transformer is representative of higher than many single table models.

If we delve more into the time complexity, suppose the original whole graph $\tilde{\mathcal{G}}$ has $\tilde{N}$ nodes, and the maximum degree of the nodes in $\tilde{\mathcal{G}}$ is $\tilde{\Delta}$, we can have at most $\tilde{\Delta}^K$ nodes in subgraph $\mathcal{G}$, so the total complexity is $min(O(K\tilde{\Delta}^K(P+d^2+\tilde{\Delta}d)), O(K\tilde{N}(P+d^2+\tilde{\Delta}d)))$, and it is $min(O(K\tilde{\Delta}^K(cd^2+c^2d+\tilde{\Delta}d)), O(K\tilde{N}(cd^2+c^2d+\tilde{\Delta}d)))$,

\section{Comparative Analysis}\label{sec:property}
As mentioned previously, two strong candidates for column prediction tasks on relational databases are DFS-based and RDB2Graph-based solutions.  To better contextualized GFS relative to these alternatives, this section presents the following comparative analysis.  First, we prove that DFS is not invariant to traversal order, and hence may produce different results leading to undesirable instabilities.  In contrast, GFS outputs are uniquely determined with no analogous ambiguity.  We then prove that GFS generalizes DFS in the sense that there will exist GFS parameterizations that can match any DFS output.  And lastly, we closely examine key factors that differentiate GFS and RDB2Graph, including key design choices unique to the former.

\subsection{DFS sensitivity to traversal order}

We compare the capability of \model{} versus DFS by first noting an important limitation of DFS:

\begin{theorem}\label{thm:DFS variant}
    Given a fixed set of input tables, one of which serves as the target, the output of DFS need not be invariant to the traversal order.
\end{theorem}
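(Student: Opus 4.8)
The plan is to prove the statement by an explicit counterexample: I will exhibit a fixed family of input tables, designate one of them as the target, and show that DFS emits different synthesized features (and in particular different values in the materialized target table) under two different orderings of the related (``children'') tables. Since the theorem only asserts that the output \emph{need not} be invariant, a single witness suffices and no general argument over arbitrary schemas is required.

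The construction has three parts. \textbf{(i) Schema.} The essential ingredient is that the relationship graph (tables as vertices, foreign-key references as edges) contains a cycle through the target together with a branch point at the target: $T$ should be related to two distinct tables, say $A$ and $B$, and $A$, $B$ should in turn both connect to a common table $C$ carrying a payload column. A concrete candidate is $T \leftarrow A$, $T \leftarrow B$ (two incoming foreign keys to $T$), $A \leftarrow C$, $B \leftarrow C$, plus one further edge between $A$ and $B$ that closes a cycle; that closing edge is what forces DFS's behaviour to be asymmetric in the order of $A$ versus $B$. Because DFS must avoid unbounded recursion around the cycle, on any single run it can only partially unroll it, and exactly which join paths get materialized --- for instance whether $C$ is reached through $A$ or through $B$ --- is decided by which neighbour of $T$ is expanded first. \textbf{(ii) Data.} I will then populate the tables with small concrete rows chosen so that aggregating $C$'s payload along $T \leftarrow A \leftarrow C$ yields a different value than aggregating it along $T \leftarrow B \leftarrow C$ (different groupings and multiplicities suffice), so that the difference between the two runs is genuinely observable in the output rather than a vacuous renaming. \textbf{(iii) Trace.} Finally I will run DFS with a fixed search depth $K$ under the ordering $(A,B)$ and under the ordering $(B,A)$, read off the generated features and their values on $T$, and observe that the two outputs disagree. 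I will also note that, since the alternative path is omitted for structural (cycle-avoidance) reasons rather than because of the depth cap, increasing $K$ does not restore agreement, matching the stronger claim made in the introduction.

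For contrast, it is worth recording why \model{} admits no analogous ambiguity: message passing in \model{} updates every node's embedding by aggregating simultaneously over all of its forward and backward neighbours with order-invariant aggregators (mean, max, min), so after $K$ rounds the target embedding, and hence the prediction, is a fixed deterministic function of the $K$-hop subgraph with no traversal choice entering anywhere. This is precisely the point of the comparison, and I would include it as a short remark immediately after the counterexample.

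The step I expect to be the main obstacle is keeping the schema minimal while making it genuinely order-sensitive: the obvious symmetric ``diamond'' schemas turn out to generate the same union of feature definitions under either ordering, so the counterexample collapses. Getting this right requires pinning down precisely how DFS terminates its recursion in the presence of cycles --- its path-based guard against revisiting a table already in a feature's lineage --- and how the depth budget is charged to forward joins versus backward aggregations; once the correct asymmetric schema is identified, choosing the data and tracing the two DFS runs are routine bookkeeping.
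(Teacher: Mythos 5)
Your proposal takes essentially the same route as the paper: exhibit a schema whose undirected table graph contains a cycle through the target, observe that DFS's visited-set guard then forces different traversal orders to drop different join paths, and populate the tables so the dropped path is the one the label depends on. The only substantive difference is that the paper resolves the step you flag as the main obstacle with a minimal three-table triangle ($A$ has foreign keys to both $B$ and $C$, and $B$ has a foreign key to $C$), under which the two traversals respectively miss the length-2 paths $C\rightarrow B\rightarrow A$ and $B\rightarrow C\rightarrow A$; no extra edge beyond the basic cycle is needed, and your concern that such symmetric-looking schemas ``collapse'' does not apply once the data is chosen asymmetrically so that only one join path determines the label.
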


\begin{algorithm}
\SetKwFunction{MF}{Make\_Features}
\SetKwFunction{Ba}{Backward}
\SetKwFunction{Fo}{Forward}
\SetKwFunction{Rf}{Rfeat}
\SetKwFunction{Df}{Dfeat}
\SetKwFunction{Ef}{Efeat}
\SetKwProg{Fn}{Function}{}{end}
\caption{Deep Feature Synthesis}\label{alg:DFS}
\SetKwInOut{Ip}{Input}
\SetKwInOut{Op}{Output}
\Ip{Target table $X^i$, set of visited tables $X_V$, search depth $K$}
\Op{Augmented target table $X^i$}
    \Fn{\MF{$X^i$, $X_V$, $K$}}{
    $X_V=X_V\cup X^i$\\
    $X_B=$\Ba{$X^i$}\\
    $X_F=$\Fo{$X^i$}\\
    \For{$X^j\in X_B$}{
        \If{$X^j$ not in $X_V$}{
            $X^j=$\MF{$X^j$, $X_V$, $K-1$}
        }
        $F^i=F^i\cup$ \Rf($X^i$, $X^j$)
    }
    \For{$X^j\in X_F$}{
        \If{$X^j$ not in $X_V$}{
            $X^j=$\MF{$X^j$, $X_V$, $K-1$}
        }
        $F^i=F^i\cup$ \Df($X^i$, $X^j$)
    }
    \Return{$X^i=F^i\cup$ \Ef{$X^i$}}
    }
\end{algorithm}

\begin{proof}
    We present the pseudocode of DFS in Algorithm~\ref{alg:DFS}.  The core of DFS is the recursion of the \texttt{Make\_Features} function, which augments the target table $X^i$ given all the tables $X^1, \ldots, X^n$.  DFS traverses the tables along both backward relationships and forward relationships, denoted by \texttt{Backward} and \texttt{Forward}.  During traversal, it either aggregates the information from tables with backward relationships via the function \texttt{Rfeat} (e.g. SUM), or directly appends the information from tables with forward relationships via the function \texttt{Dfeat}.  We refer to \cite{kanter2015deep} for details of how DFS works.

    From the pseudocode we can see that DFS traverses the undirected schema graph of the tables in a depth-first order.  Therefore, when there is a loop in the underlying undirected schema graph, we can expect that the output of DFS will be different.  We therefore present a counterexample and demonstrate how DFS would process it.

    \begin{figure}
        \centering
        \includegraphics[width=0.8\linewidth]{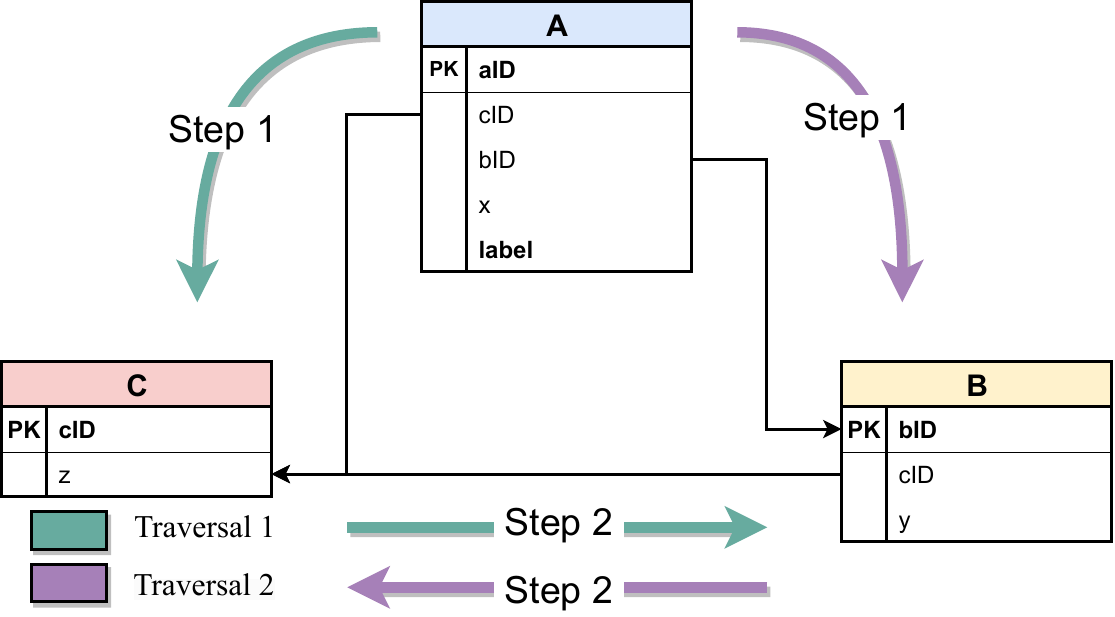}
        \caption{Synthetic Dataset 1's schema (Schema 1) and the traversal path of two traversal types.}
        \label{fig:syn-schema-1}
    \end{figure}

    \header
    As shown in Figure~\ref{fig:syn-schema-1}, we create three tables $A$, $B$, and $C$, each with a single feature column named $x$, $y$ and $z$ respectively.  We set Table $A$ as the target table, assigning it with an additional $label$ column.
    Table A has a foreign key reference to both Table B and C, denoted as $A.bID$ and $A.cID$ \footnote{For the sake of conciseness, we use dot notations to refer to a column or a cell in a table, e.g. $A.x$ refers to the column $x$ in table $A$, and $a.x$ refers to the cell value of row $a$ and column $x$.}, while Table B itself has a foreign key reference to C, denoted as $B.cID$.  Given a depth of 2, there are four possible ways to augment any row $a$ in Table $A$ each corresponding to a join path of length 2 or fewer:
    \begin{enumerate}[leftmargin=*]
        \item Find the row $c$ in $C$ such that $a.cID = c.cID$ and augment $a$ with $c.z$.  This follows the join path $C \rightarrow A$.
        \item Find the row $b$ in $B$ such that $a.bID = b.bID$ and augment $a$ with $b.y$.  This follows the join path $B \rightarrow A$.
        \item Find the row $b$ in $B$ such that $a.bID = b.bID$, then find the row $c$ in $C$ such that $b.cID = c.cID$ and augment $a$ with $c.z$.  This follows the join path $C \rightarrow B \rightarrow A$.
        \item Find the row $c$ in $C$ such that $a.cID = c.cID$, then find all row $b$ in $B$ such that $b.cID = c.cID$, and augment $a$ with the aggregation of all $b.y$.  This follows the join path $B \rightarrow C \rightarrow A$.
    \end{enumerate}
    DFS can, however,  never capture all four of them because of the depth-first nature.  For the schema above, there are two ways for DFS to traverse and perform feature augmentations, which are also shown in Figure~\ref{fig:syn-schema-1}.  In Traversal 1, DFS handles the join path $B \rightarrow C \rightarrow A$ first, so Table $B$ will already be visited when DFS attempts to traverse further down from $A \rightarrow B$, therefore ignoring the possible join path $C \rightarrow B \rightarrow A$.  The same goes for Traversal 2 where join path $B \rightarrow C \rightarrow A$ is ignored.  Therefore, DFS can at most capture three of the four join paths, hence feature augmentations, given the algorithm above.  It can also be seen that the explored join path differs with traversal.
\end{proof}

\begin{remark}
    \model{} does not involve any operations analogous to traversal order, and hence its output is uniquely determined without ambiguity. Moreover,  \model{} can capture all the join paths due to full neighbor sampling.
\end{remark}

We note that the counterexample schema constructed in Figure~\ref{fig:syn-schema-1} may appear in real-world applications.  For example, some music streaming platforms like Soundcloud allow users to upload music, so it could have a user table(Table C), a music table(Table B), and an interaction table(Table A) referencing both the user and the music tables.  The music table there would have a foreign key reference to the user table as well, and so traversal order could subsequently impact DFS outputs.

We further empirically verify the impact of DFS's sensitivity as well as \model{}'s immunity in Section~\ref{sec:synthetic-experiments}, where we construct synthetic datasets based on the schema shown in Figure~\ref{fig:syn-schema-1}, as well as a similarly-structured schema shown in Figure~\ref{fig:syn-schema-2}.

\subsection{\model{} generalizes DFS}

This section will show that \model{} generalizes DFS, meaning that the output of a certain parametrization of \model{} will be a superset of DFS's output.  The intuition is to show that \model{} generalizes DFS under the most simple case of two related tables, where one table has a foreign key reference to the other.  Generalization on more complicated cases can then be established by induction. We will use the notation defined in Section \ref{sec:model}

We denote $\hat{X}^{n,l}_{i,:}$ be the $i$-th row of table $X^n$ augmented by depth-$l$ DFS.  Moreover, we relax the definition of the functions  defined in Section~\ref{sec:model} (e.g. $E_\theta$, $Agg_\psi$, etc.), such that their input and output dimensions can be arbitrary rather than a fixed $d$.

We first show that given two tables $X_1$ and $X_2$, if $X_1$ has a forward relationship to $X_2$, then \model{} generalizes DFS.
\begin{definition}[Subset and superset vectors]
    We define that $A=[a_1, \ldots, a_m]$ is a \emph{subset vector} of $B=[b_1, \ldots, b_n]$ if there exists a one-to-one mapping $\mu: \{1, \cdots, m\} \mapsto \{1, \cdots, n\}$, such that for all $i \in \{1, \cdots, m\}$, $a_i = b_{\mu(i)}$.  We denote it as $A \subseteq^v B$.  Conversely, we say that $B$ is a superset vector of $A$ and denote it as $B \supseteq^v A$.
\end{definition}

\begin{lemma}
    \label{lem:forward}
    Suppose we have two tables $X^1$ and $X^2$ with $C_1$ and $C_2$ columns respectively, and $X^1$ has a forward relationship to $X^2$.  Assume that the columns other than foreign keys and primary keys only have continuous values. There exists a two-layer \model{} parametrization such that for any $i$, $h^{2}(i, 1) \supseteq^v \hat{X}^{1,1}_{i,:}=[\texttt{Make\_Features}(X^1, \varnothing, 1)]_{i,:}$, meaning that the second-layer row embedding from \model{} on $X^1$ is a superset vector of the output from depth-one DFS on $X^1$.
\end{lemma}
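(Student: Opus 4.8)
The plan is to pin down exactly what depth-one DFS outputs on $X^1$, then unroll the two-layer \model{} recursion from the zero initialization, and finally choose the free parameters $\theta,\phi,\psi$ so that the coordinates of $h^2(i,1)$ literally reproduce those DFS features (up to the index permutation allowed by $\subseteq^v$). \textbf{Characterizing depth-one DFS.} Running \texttt{Make\_Features}$(X^1,\varnothing,1)$ of Algorithm~\ref{alg:DFS}: since nothing references $X^1$, the backward set is empty and no \texttt{Rfeat} is produced; the forward set is $\{X^2\}$, so \texttt{Dfeat} appends to row $i$ the non-key columns of the unique row $i_2$ of $X^2$ that $X^1_{i,:}$ forward-references, and \texttt{Efeat} contributes the (possibly entrywise-transformed) non-key columns of $X^1_{i,:}$ itself. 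By the continuity assumption these are all continuous values, so $\hat X^{1,1}_{i,:}$ is, up to a fixed permutation, the concatenation of the continuous feature columns of $X^1_{i,:}$ with those of $X^2_{i_2,:}$. I will also use two elementary facts that follow immediately from the definition of subset vectors: $\subseteq^v$ is transitive, and $A\subseteq^v A'$ together with $B\subseteq^v B'$ imply $\mathrm{concat}[A,B]\subseteq^v\mathrm{concat}[A',B']$.

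\textbf{Unrolling \model{}.} Starting from $h^0\equiv\vec{0}$ and applying \eqref{eq:message}--\eqref{eq:update}, and using that $X^1$ has one forward relationship (to $X^2$) and no backward relationship while $X^2$ has one backward relationship (to $X^1$) and no forward relationship, the relevant computations reduce to $h^1(i_2,2)=N^{(2)}_\phi\big(\mathrm{concat}[E_\theta(x^2_{i_2}),\,\mathrm{Agg}_\psi(\{\vec{0},\dots\})]\big)$ and $h^2(i,1)=N^{(1)}_\phi\big(\mathrm{concat}[E_\theta(x^1_i),\,h^1(i_2,2)]\big)$, where I exploit that rows in different tables may carry different embedding functions $N^{(1)}_\phi,N^{(2)}_\phi$ (and that, since dimensions are relaxed, these are anyway distinct maps). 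Note the raw-feature block $E_\theta(x^1_i)$ re-enters at the second layer via the input residual connection, which is what lets \model{} still ``see'' $X^1$'s own columns.

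\textbf{Choosing the parametrization.} For each continuous column $f$ take the relaxed one-dimensional embedding $e_f=\mathrm{ReLU}(c_f w_f + b_f)$ with $w_f=1$, $b_f=0$; since continuous values are normalized into $[0,1]$ this acts as the identity, so the $f$-th block of $E_\theta(x)$ equals that column's value. Take $N^{(2)}_\phi$ to be a linear ``pass-through'' (an MLP whose weight matrix selects exactly the blocks originating from $E_\theta(x^2_{i_2})$); then $h^1(i_2,2)\supseteq^v$ (continuous columns of $X^2_{i_2,:}$), while $\mathrm{Agg}_\psi$ can be anything since it only ever sees zeros. Consequently the argument of $N^{(1)}_\phi$ in $h^2(i,1)$ is $\mathrm{concat}[E_\theta(x^1_i),h^1(i_2,2)]$, which by the concatenation fact is a superset vector of (continuous columns of $X^1_{i,:}$) concatenated with (continuous columns of $X^2_{i_2,:}$); picking $N^{(1)}_\phi$ to be the analogous pass-through and invoking transitivity yields $h^2(i,1)\supseteq^v\hat X^{1,1}_{i,:}$.

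\textbf{Anticipated obstacle.} The only real friction is bookkeeping around normalization: \model{} applies a fixed $[0,1]$ rescaling plus ReLU, whereas DFS appends raw values, so strict equality of coordinates needs either the standard assumption that both pipelines ingest the same normalized inputs, or the observation that the linear layer can undo the rescaling (set $w_f$ to the pre-normalization range and $b_f$ to the offset) whenever the raw values are nonnegative. One should also check that the pass-through maps are legitimate instances of the ``single table model'' slot for $N_\phi$ — they are, as an MLP with an identity-like weight block realizes them, and any simple entrywise transform that \texttt{Efeat} might apply can be absorbed into the same MLP. Finally, if $X^2$ carries further relationships, the concatenations simply acquire additional zero-valued blocks that the pass-through ignores, and the general multi-table statement then follows, as the authors indicate, by induction over the schema. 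I expect getting the precise DFS feature set (especially \texttt{Efeat}) and the normalization reconciliation to be the delicate points; everything else is routine substitution.
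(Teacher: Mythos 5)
Your proposal is correct and follows essentially the same route as the paper's proof: unroll the two-layer recursion from the zero initialization, choose $E_\theta$ and the node embedding functions to act as (block-selecting) identities so that $h^1$ on the referenced row of $X^2$ carries its continuous columns, and conclude $h^2(i,1)\supseteq^v \hat{X}^{1,1}_{i,:}$ by concatenation. If anything you are slightly more careful than the paper on two points it glosses over — the ReLU/normalization reconciliation in the continuous-value embedding, and the fact that $h^1$ on $X^2$ also receives a (zero-valued) backward-aggregation block — but these refinements do not change the argument.
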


\begin{proof}
    Without loss of generality, we assume that $X^1_{:, 1}$ is a foreign key reference to $X^2_{:, 1}$, which is the primary key column of $X^2$. All other columns just have continuous values. In this case, DFS will simply append the corresponding continuous values from $X^2$ to $X^1$.
    Let $j$ be the row number such that $X^2_{j, 1} = X^1_{i, 1}$,
    meaning that $j$ indexes the row of $X^2$ that is referenced by the foreign key in the $i$-th row of $X^1$.  Then, DFS yields
    $$
    \hat{X}^{1,1}_{i,:} = [X^1_{i, 2}, X^1_{i, 3}, \ldots, X^1_{i,C_1}, X^2_{j, 2}, X^2_{j, 3}, \ldots, X^2_{j, C_2}]~.
    $$

    If we set \model{}'s node embedding function $N_\phi(x) = x$ and feature embedding function $E_\theta(x) = x$ both as the identity function, then we can see that
    \begin{gather*}
        h^1(j, 2)=\text{concat}[E_\theta(X^2_{j, :}), h^0(i, 1)]=[X^2_{j, 2}, \ldots, X^2_{j, C_2}, \Vec{0}] \\
        \tilde{h}^1(i, 1) = h^1(j, 2), \tilde{h} \text{ is defined in }Section \ref{sec:message_passing}  \\
        E_\theta(X^1_{i, :}) = [X^1_{i, 2}, X^1_{i, 3}, \ldots, X^1_{i,C_1}]
    \end{gather*}
    Therefore, for all $i$,
    $$
    \begin{aligned}
    h^2(i, 1) &= N_\phi\big(\text{concat}(E_\theta(X^1_{i, :}), \tilde{h}^1(i, 1))\big) \\
    &= [X^1_{i, 2}, X^1_{i, 3}, \ldots, X^1_{i,C_1}, X^2_{j, 2}, X^2_{j, 3}, \ldots, X^2_{j, C_2}, \Vec{0}] \\
    &\supseteq^v \hat{X}^{1,1}_{i,:}
    \end{aligned}
    $$
\end{proof}

We now prove that the same holds for backward relationship between two tables.

\begin{lemma}
    \label{lem:backward}
    Suppose we have two tables $X^1$ and $X^2$ with $C_1$ and $C_2$ columns respectively, and $X^1$ has a backward relationship to $X^2$.  Assume that the columns other than foreign keys and primary keys only have continuous values 
    There exists a two-layer \model{} parametrization such that for any $i$, $h^{2}(i, 1) \supseteq^v \hat{X}^{1,1}_{i,:}=[\texttt{Make\_Features}(X^1, \varnothing, 1)]_{i,:}$, meaning that the second-layer row embedding from \model{} on $X^1$ is a superset vector of the output from depth-one DFS on $X^1$. 
\end{lemma}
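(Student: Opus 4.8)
The plan is to mirror the proof of Lemma~\ref{lem:forward}, replacing the single forward neighbor by the set of backward neighbors and the trivial concatenation by \model{}'s PNA-style aggregator. Without loss of generality I would assume $X^2_{:,1}$ is a foreign key reference to $X^1_{:,1}$, the primary key column of $X^1$, and for a fixed row index $i$ of $X^1$ set $S_i = \{m : X^2_{m,1} = X^1_{i,1}\}$, the rows of $X^2$ that reference it. Since all non-key columns are continuous and (as noted in the introduction) DFS restricts to \emph{mean}, \emph{max}, \emph{min} on continuous columns, depth-one DFS yields
$$
\hat{X}^{1,1}_{i,:} = \big[X^1_{i,2}, \ldots, X^1_{i,C_1},\ \{\,\mathrm{agg}(\{X^2_{m,c} : m \in S_i\})\,\}_{\mathrm{agg}\in\{\text{mean},\max,\min\},\, c \in \{2,\ldots,C_2\}}\big]~,
$$
i.e. the non-key columns of row $i$ of $X^1$, followed by the three pooled statistics of each non-key column of $X^2$ over $S_i$.

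Next I would exhibit the parametrization. Set $E_\theta$ and $N_\phi$ to the identity, with relaxed dimensions as permitted by the preamble. Because every layer-zero embedding is $\vec{0}$ (the forward neighbor of $m$ in $X^1$, and any further relations of $X^2$, all contribute only padding), each row $m \in S_i$ of $X^2$ has
$$
h^1(m, 2) = \mathrm{concat}\big[E_\theta(X^2_{m,:}),\, \tilde{h}^0(m,1),\, \ldots\big] = [X^2_{m,2}, \ldots, X^2_{m,C_2}, \vec{0}]~,
$$
so $M^1(i, b_1) = \{h^1(m,2) : m \in S_i\}$. For $\mathrm{Agg}_\psi$ I would use the identity-scaled ($S=1$) branch of the PNA aggregator, which already computes the coordinatewise mean, max and min of the vectors in $M^1(i,b_1)$, and choose the final linear readout to pass these three blocks through unchanged (a valid, indeed trivial, choice). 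Then $\mathrm{Agg}_\psi(M^1(i,b_1))$ contains, as sub-vectors, $\mathrm{mean}_{m\in S_i} X^2_{m,c}$, $\max_{m\in S_i} X^2_{m,c}$ and $\min_{m\in S_i} X^2_{m,c}$ for every $c \in \{2,\ldots,C_2\}$, plus a harmless $\vec{0}$ block from the padding.

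Finally I would assemble the layer-two embedding,
$$
h^2(i,1) = N_\phi\big(\mathrm{concat}[E_\theta(X^1_{i,:}),\, \mathrm{Agg}_\psi(M^1(i,b_1))]\big) = \big[X^1_{i,2}, \ldots, X^1_{i,C_1},\ \text{mean}/\max/\min \text{ of } X^2\text{'s non-key columns over } S_i,\ \vec{0}\big]~,
$$
and observe that every entry of $\hat{X}^{1,1}_{i,:}$ occurs among the entries of $h^2(i,1)$ under an injective index map that skips the $\vec{0}$ padding, giving $h^2(i,1) \supseteq^v \hat{X}^{1,1}_{i,:}$, as claimed.

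The step needing the most care is the aggregation: one must check that \model{}'s PNA-style aggregator can reproduce DFS's \texttt{Rfeat} outputs as a sub-vector. This is immediate once we use the paper's convention that DFS's continuous aggregations are exactly \emph{mean}, \emph{max}, \emph{min} — precisely the base statistics the PNA aggregator forms before scaling — so the identity-scaling branch with a pass-through readout suffices and no approximation is required (should one wish to also cover \emph{sum} or \emph{count}, one appends a degree feature to the embedding and uses the PNA scalings; I would keep this remark parenthetical). Two minor points to dispatch along the way: when $S_i = \varnothing$, both DFS and \model{} fall back to a default/zero value and agree; and any auxiliary forward or backward relations of $X^2$ only inject further $\vec{0}$ coordinates at layer zero, which the superset-vector index map ignores.
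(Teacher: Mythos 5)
Your proof is correct and follows essentially the same route as the paper's: identity choices for $E_\theta$ and $N_\phi$, layer-one embeddings of the referencing rows of $X^2$ (padded with $\vec{0}$ from the zero-initialized neighbors), aggregation over that set, and concatenation to obtain the superset-vector relation. The only difference is that the paper simply posits $\mathrm{Agg}_\psi(\cdot)=\mathrm{Agg}(\cdot)$, whereas you additionally verify that the concrete PNA-style aggregator (identity-scaled mean/max/min with a pass-through readout) realizes DFS's continuous aggregations — a harmless refinement of the same argument.
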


\begin{proof}
    Without loss of generality, assume that $X^2_{:, 1}$ is a foreign key reference to $X^1_{:, 1}$, which is the primary key column of $X^1$. All other columns just have continuous values. Let $j_1, j_2, \ldots, j_m \in \{j : X^2_{j, 1} = X^1_{i, 1}\}$ be all the rows in $X_2$ that has foreign key reference to the $i$-th row in $X_1$.  DFS's output in this case becomes:
    $$
    \hat{X}^{1,1}_{i,:} = [X^1_{i, 2}, X^1_{i, 3}, \ldots, X^1_{i,C_1}, \text{Agg}(\hat{M}_{i})]
    $$
    where
    $$
    \begin{gathered}
    \hat{M}_{i}=\{[X^2_{j_1,2},\ldots,X^2_{j_1,C_2}], \ldots, [X^2_{j_m, 2}, \ldots, X^2_{j_m, C_2}]\}\\
    \text{Agg}(\hat{M}_{i}) = [\text{Agg}(\hat{M}_{i})_2, \ldots, \text{Agg}(\hat{M}_{i})_{C_2}] \\
    \text{Agg}(\hat{M}_{i})_k = [\text{Agg}(\{X^2_{j_1, k}, X^2_{j_2, k}, \ldots, X^2_{j_m, k}\})]
    \end{gathered}
    $$
    So function \text{Agg} is doing dimension-wise aggregation for a set of vectors $\hat{M}_{i}$.
    
    Similar to the proof in Lemma~\ref{lem:forward}, we can set $N_\phi(x) = x$, $E_\theta(x) = x$, $M^1(i, 2) = \{h^1(j_1, 2), \ldots, h^1(j_m, 2)\}$ and $\text{Agg}_\psi(\cdot) = \text{Agg}(\cdot)$.  Then    
    $$
    \begin{aligned}
    h^2(i, 1) &= \text{concat}(E_\theta(X^1_{i,:}), \text{Agg}_\psi(M^1(i, 2))) \\
    &\supseteq^v \hat{X}^{1,1}_{i,:}
    \end{aligned}
    $$
\end{proof}

\begin{theorem}
    Suppose we have a database with $N$ tables \\$\{X^1, X^2, \ldots, X^N\}$, each with $C_1, \ldots, C_n$ columns respectively. Assume that the columns other than primary key and foreign key just have continuous values.  Then for any $n \in\{1, \ldots, N\}$ 
    and $l \in \mathbb{N}^+$, there exists a $l+1$-layer \model{} parametrization such that for any $i$, $h^{l+1}(i, n) \supseteq^v \hat{X}^{n,l}=[\texttt{Make\_Features}(X^n, \varnothing, l)]_{i,:}$, meaning that the $l+1$-the layer row embedding from \model{} on $X^n$ is a superset vector of the output of $i$-th row from depth-$l$ DFS on $X^n$.
\end{theorem}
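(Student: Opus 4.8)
The plan is to induct on the DFS search depth $l$, using Lemmas~\ref{lem:forward} and~\ref{lem:backward} (which handle a single forward resp.\ backward relationship between two tables) as the atomic steps. Throughout I would fix one \emph{universal} \model{} parametrization in the relaxed arbitrary-dimension setting: every node embedding function $N_\phi$ and every feature embedding function $E_\theta$ is the identity (so $E_\theta$ just drops the primary/foreign-key columns and copies the rest), and every aggregation function $\text{Agg}_\psi$ is the dimension-wise $\text{Agg}$ used by DFS, exactly as in the proofs of Lemmas~\ref{lem:forward} and~\ref{lem:backward}. Since $N_\phi$ is shared only within a node type and $\text{Agg}_\psi$ only within an edge type, this single choice is simultaneously legal for all tables and relationships, so the induction never has to juggle table-dependent parametrizations, and the ``$l$-layer'' parametrization for one depth is just the same map run fewer times.

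First I would strengthen the statement to a \emph{uniform} induction hypothesis: for every table index $n$ and every depth $l$ there is a coordinate injection $\mu^{n,l}$, \emph{independent of the row index $i$}, that witnesses $h^{l+1}(i,n) \supseteq^v \hat{X}^{n,l}_{i,:}$ for all rows $i$ at once. The base case $l=0$ is immediate: with $h^0 \equiv \vec{0}$, the identity choices give that $h^1(i,n)$ is $E_\theta(X^n_{i,:})$ followed by a block of zeros (the zero forward embeddings and the aggregation of zero vectors), which is a superset vector of $\hat{X}^{n,0}_{i,:} = \texttt{Efeat}(X^n)_{i,:}$ via the obvious row-independent injection.

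For the inductive step I would expand $h^{l+1}(i,n)$ via \eqref{eq:message}--\eqref{eq:update} with the identity choices, so it becomes the concatenation of (i) the own-column block $E_\theta(X^n_{i,:})$, matching DFS's \texttt{Efeat} block; (ii) for each forward neighbour $X^{f_k}$, the chunk $\tilde{h}^l(i,f_k)=h^l(i_{f_k},f_k)$; and (iii) for each backward neighbour $X^{b_k}$, the chunk $\text{Agg}_\psi(M^l(i,b_k))$. For (ii), the induction hypothesis at depth $l-1$ gives $h^l(i_{f_k},f_k)\supseteq^v [\texttt{Make\_Features}(X^{f_k},\varnothing,l-1)]_{i_{f_k},:}$, and a short visited-set monotonicity lemma — enlarging the visited set only deletes recursive calls — gives $[\texttt{Make\_Features}(X^{f_k},\varnothing,l-1)]_{i_{f_k},:}\supseteq^v [\texttt{Make\_Features}(X^{f_k},\{X^n\},l-1)]_{i_{f_k},:}$, which is exactly the block DFS appends via \texttt{Dfeat} on $(X^n,X^{f_k})$; composing the two $\supseteq^v$ relations handles this chunk. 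For (iii), the \emph{uniform} hypothesis says every $h^l(j,b_k)\supseteq^v \hat{X}^{b_k,l-1}_{j,:}$ through the \emph{same} injection $\mu^{b_k,l-1}$ over all referencing rows $j$; since $\text{Agg}_\psi=\text{Agg}$ acts coordinate-wise, the \model{}-side aggregated vector restricted to the image of $\mu^{b_k,l-1}$ reproduces DFS's aggregated vector (after the same visited-set monotonicity step applied inside the set), so this chunk is a superset vector of DFS's \texttt{Rfeat} block. Since $\hat{X}^{n,l}_{i,:}$ is precisely the concatenation of its \texttt{Efeat}, \texttt{Dfeat} and \texttt{Rfeat} blocks, and the \model{} chunks sit in disjoint coordinate blocks each dominating the corresponding DFS block, concatenating the injections yields $h^{l+1}(i,n)\supseteq^v \hat{X}^{n,l}_{i,:}$ with a row-independent injection, which closes the induction. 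Loops and repeated relationships (as in the Section~\ref{sec:property} counterexample) cause no trouble, since we only ever claim a superset: extra padding and extra join paths explored by \model{} are harmless.

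The step I expect to be the real obstacle is (iii), i.e.\ making the set-valued aggregation commute with $\supseteq^v$. A hypothesis that only asserts \emph{some} per-row injection is too weak, because DFS aligns the per-row augmented vectors coordinate-wise \emph{before} aggregating; so I must carry the stronger row-independent injection through the entire induction and also ensure that the zero-padded ``garbage'' coordinates produced by the identity $N_\phi$ never collide with DFS's coordinates under mean/max/min — appending them strictly after all DFS coordinates keeps this bookkeeping clean. The only secondary nuisance is stating the visited-set monotonicity lemma for Algorithm~\ref{alg:DFS} in a form that composes transitively with $\supseteq^v$.
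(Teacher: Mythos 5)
Your proposal is correct and follows essentially the same route as the paper's proof: induction on the DFS depth $l$ with Lemmas~\ref{lem:forward} and~\ref{lem:backward} as the atomic steps, plus a visited-set monotonicity observation (the paper's claim that $\texttt{Rfeat}(X^1, X^j)\subseteq\texttt{Rfeat}(X^1, X^{'j})$ whether or not $X^j\in X_V$) to compare DFS restarted from scratch on each neighbour with DFS continued from a nonempty visited set. Your explicit strengthening to a row-independent coordinate injection, needed so that the dimension-wise aggregation commutes with $\supseteq^v$ over the set of referencing rows, is a point the paper's proof passes over with ``the same for the aggregation part,'' so your version is, if anything, the more careful one.
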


\begin{proof}
    This can be proven by induction. Without loss of generality, we focus on the case where $n=1$.
    
     When $l=1$, only $X^1$ and all the tables that $X^1$ have backward and forward relationships with will matter the result, so this case is a simple extension of Lemma~\ref{lem:forward} and Lemma~\ref{lem:backward} to multiple tables. It is obvious that $h^2(i, 1)\supseteq^v \hat{X}^{1,1}_{i,:}$.
     
     Assume that $h^l(i, 1)\supseteq^v \hat{X}^{1,l-1}_{i,:}$, We need to prove that $h^{l+1}(i, 1)\supseteq^v\hat{X}^{1,l}_{i,:}$.
     
    It is obvious that \begin{align}\label{eq:th4tmp}
        h^l(j, m)\supseteq^v \hat{X}^{m,l-1}_{j,:}~,
    \end{align}
    where $\hat{X}^{m,l-1}=$\texttt{Make\_Features}($X^m, \varnothing, l-1$) for any table $X^m$ and any row number $j$ in this table due to the inductive hypothesis, and then we can follow the process of DFS in Algorithm~\ref{alg:DFS} in line 5-10 of \texttt{Make\_Features}($X^1, \varnothing, l$).

    We denote $X^{'j}\triangleq\texttt{Make\_Features}(X^j,\varnothing, l-1)$, and $F^{'i}$ to be the set that follow the same process of $F^{i}$ in Algorithm~\ref{alg:DFS} but replace \texttt{Rfeat}($X^i$, $X^j$) to \texttt{Rfeat}($X^i$, $X^{'j}$) and same with \texttt{Dfeat}.
    
    If $X^j \notin X_V$, then for any $i_j$
    \begin{gather*}
        X^j=\texttt{Make\_Features}(X^j, \{X^1\}, l-1)~,\\
        \texttt{Rfeat}(X^1, X^j)\subseteq \texttt{Rfeat}(X^1, X^{'j})~.
    \end{gather*}
    If $X^j \in X_V$, then $X^j$ must have been visited in \texttt{Make\_Features}($X^j, X_V, k$) with some $X_V\neq\varnothing$ and $k\leq l-1$. It is obvious that 
    $$
        \texttt{Rfeat}(X^1, X^j)\subseteq \texttt{Rfeat}(X^1, X^{'j})~.
    $$
    We can repeat a similar process in line 11-16, then we can get
    $$
    F^i\subseteq F^{'i}
    $$
    Therefore,  $\hat{X}^{1, l}\subseteq\hat{X}^{'1, l}\triangleq F^{'i}\cup$\texttt{Efeat}($X^{i}$). Actually we do identity mapping for \texttt{Efeat} in our setting.
    \begin{align*}
        h^{l+1}(i,1)=&\text{concat}[E_\theta(X^1_{i,:}), \tilde{h}^{l}(i, f_1), \ldots , \tilde{h}^{l}(i, f_{|F|}),\\
        &\text{Agg}_{\psi_1}(M^{l}(i, b_1)), \ldots,\text{Agg}_{\psi_{|B|}}(M^{l}(i, b_{|B|}))]    
    \end{align*}
    According to \eqref{eq:th4tmp} and the statement we get, we can find that $h^{l}(i_n, n)\supseteq^v X^{'n}_{i_n,:}$($i_n$ is defined in Section \ref{sec:message_passing}) for any $n\in\{f_1,\ldots,f_{|F|}, b_1,\ldots b_{|B|}\}$ and the same for the aggregation part.
    Therefore, follow similar prove in Lemma~\ref{lem:forward} and Lemma~\ref{lem:backward}, we can show that 
    $$
    h^{l+1}(i,1)\supseteq^v\hat{X}^{'1,l}_{i, :} \supseteq^v \hat{X}^{1, l}_{i, :}~.
    $$

\end{proof}

\subsection{Differences between \model{} and RDB2Graph}

The major differences between \model{} and RDB2Graph are as follows: (1) our message functions $N_\phi$ in \eqref{eq:message} and predictor function $\pi_\omega$ \eqref{eq:prediction} are strong differentiable single-table tabular models like DeepFM \cite{guo2017deepfm} and FT-Transformer \cite{gorishniy2021revisiting}, (2) all message functions $N_\phi$ and the downstream predictor function $\pi_\omega$ has skip connections to the raw features of the tables, (3) we use PNA-style aggregations while RDB2Graph only use simple aggregations from e.g. RGCN \cite{schlichtkrull2018modeling} and RGAT \cite{busbridge2019relational}. (4) we use the node embedding function with same parameter for each node types in every layer while RDB2Graph uses functions with different parameter in different layer.  

Differentiable tabular models like DeepFM and FT-Transformer express each column value with a latent embedding vector for single table prediction.  \model{} follows that tradition, \emph{appending the information of each auxiliary table as another latent vector as if there is a new column to the raw features which are regarded as the original columns in the table}, and then feed them all to the downstream predictor.  Such latent vectors of auxiliary tables are in turn computed by another DeepFM or FT-Transformer running on the auxiliary tables themselves, which in turn has more auxiliary tables' information appended.  The end result is that we have multiple DeepFMs/FT-Transformers stacked together, each modeling the feature interaction within its responsible table, and passes the information to other DeepFMs or FT-Transformers.

Moreover, such DeepFMs or FT-Transformers directly takes the raw features from the tables, meaning that each \model{} layer intrinsically has a skip connection to the input features.  Such skip connections are common tricks to mitigate oversmoothing for GNNs (e.g. \cite{chen2020simple}). RDB2Graph in contrast has its predictive capability heavily dependent on the downstream MLP, which is known to be less powerful than DeepFM and FT-Transformer, therefore lacking the ability to model feature interactions within tables and across tables.  They also do not have the skip connections like \model{} does.

Additionally, our choice of using PNA-style aggregations, which consists of \textit{mean}, \textit{min}, \textit{max} at the same time, closely mimic what DFS (specifically Featuretools) does by default: DFS aggregates the same column in multiple different ways, generating new columns for each aggregation. Lastly, to enhance parameter efficiency in our framework, especially considering the use of complex node embedding functions like DeepFM and FT-transformer, we implement a shared parameter approach across each layer.

\section{Experiments}\label{sec:experiments}

\subsection{Synthetic Datasets}
\label{sec:synthetic-experiments}

\begin{figure}
    \centering
    \includegraphics[width=0.8\linewidth]{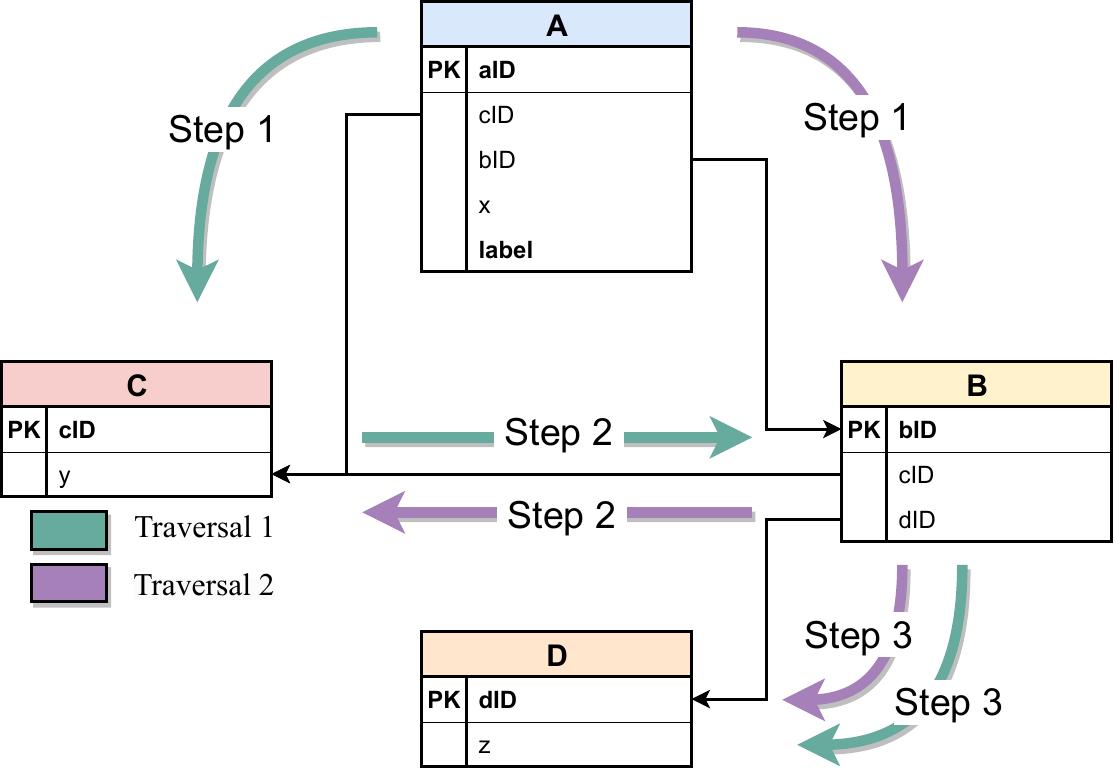}
    \caption{Synthetic Dataset 2's schema (Schema 2) and the traversal path of two traversal types.}
    \label{fig:syn-schema-2}
\end{figure}

We have shown in Section~\ref{sec:property} that DFS is sensitive to traversal order while \model{} is immune to that.  But can \model{} indeed capture the necessary join path among all the paths better than DFS?  We empirically verify \model{}'s capability by actually constructing the two counterexample datasets.
\begin{itemize}[leftmargin=*]
\item \textbf{Schema 1}: The schema corresponding to Fig.~\ref{fig:syn-schema-1} has three tables $A$, $B$ and $C$. Table $A$ is the target table. We set $A.x = 1$ and $C.z = 1$ for all rows in $A$ and $C$, while setting $B.y$ to uniformly random values between [0, 10].  We designate the label column $A.label$ as follows: each row $a$ in $A$ gets a label of 1 iff $\sum_{b: b \in B, b.cID=a.CID} b.y \geq \tau$, and 0 otherwise.  This ensures that the label column in $A$ is only related to the ``golden" join path $B \rightarrow C \rightarrow A$.

\item \textbf{Schema 2}: The schema corresponding to Fig.~\ref{fig:syn-schema-2} has four tables $A, B, C$ and $D$. Table $A$ is the target table. We set $A.x = 1$ and $C.y = 1$ for all rows in $A$ and $C$, while setting $D.z$ to uniformly random values between $[0, 10]$. We designate the label column $A.label$ as follows: each row $a$ in $A$ gets a label of 1 iff $\sum_{b, d: b \in B, d \in D, b.cID=a.CID, b.dID=d.dID} d.z \geq \tau$, and 0 otherwise. This ensures that the label is only related to the ``golden" join path $D\rightarrow B\rightarrow C\rightarrow A$.
\end{itemize}

In our experiments, $\tau$ is set to 25 to ensure that the number of rows with label 1 roughly equals that with label 0. We make both datasets' target tables 1000 rows, and we split train/valid/test sets with the ratio 0.6/0.2/0.2. We set the depth of both DFS and \model{} to 2 for Schema 1 and 3 for Schema 2, as they are the length of golden join paths. The test results are presented in Table \ref{tab:syn}.

\begin{table}
    \caption{AUC results of synthetic datasets.}
    \centering
    \renewcommand\arraystretch{1.25}
    \begin{tabular}{l|l|l}
        \toprule 
        \textbf{Schema1} & depth = 2 & depth = 3 \\
        \midrule
        DFS (Traversal 1) & 0.9983 & - \\
        DFS (Traversal 2) & 0.4686 & 0.9964 \\
        \model{}          & 0.9984 & - \\
        \midrule
        \textbf{Schema 2} & depth = 3 & depth = 4 (max)\\
        \midrule
        DFS (Traversal 1) & 0.9941 & - \\
        DFS (Traversal 2) & 0.9263 & 0.9303 \\
        \model{}          & 0.9986 & - \\
        \bottomrule
    \end{tabular}
    \label{tab:syn}
\end{table}

From these results, we can see that the sensitivity of DFS, as demonstrated in Schema 1 and Schema 2, can indeed result in detrimental effects on the final prediction. In both schemas, Traversal 2 of DFS fails to capture the golden join path, resulting in limited learning capability and a significant drop in performance. In contrast, \model{}'s ability to cover all possible join paths proves advantageous in both Schema 1 and Schema 2 when the number of search depth is right.

One might wonder if increasing the depth of DFS could eventually allow it to discover the join paths ignored in smaller depths.  To verify that, we also tried increasing the depth until either DFS stops augmenting with more features, or reaches 0.99 AUC result, which we consider as successfully capturing the golden join path.  We can see that DFS failed in Schema 2 even if we increase the depth to the maximum of 4, beyond which it stops augmenting.  Even if increasing the depth did help in Schema 1, it will cost much more time and memory due to the exponential growth of augmented features.  These observations further underscore the need for alternative approaches, such as \model{}, which is immune to the traversal order sensitivity like DFS. %

\subsection{Real-World Datasets Experiment Setup}
\subsubsection{Datasets Discription}
\begin{table*}
    \caption{Dataset statistics.}
    \renewcommand\arraystretch{1.25}
    \centering
    \begin{tabular}{l|c|c|c|c|c|c|c}
        \toprule
        Dataset & \# Train & \# Valid & \# Test & \# Tables & \# Total Rows & \# Foreign Keys & \# Total Columns \\
        \midrule
        AVS         & 96035  & 32011 & 32011 & 3 & 8.2M & 2  & 23  \\
        Outbrain    & 828251 & 21796 & 21796 & 9 & 28M  & 10 & 23  \\
        Diginetica  & 882415 & 20521 & 20521 & 6 & 2.7M & 7  & 20  \\
        KDD15       & 72325  & 24108 & 24108 & 4 & 8.3M & 3  & 19  \\
        \bottomrule
    \end{tabular}
    \label{Dataset Information}
\end{table*}

We evaluate the effectiveness of our proposed model on four real-world datasets: 
\textit{Acquire-valued-shoppers, Outbrain, Diginetica, KDD2015}, They span across different domains, including customer retention prediction, click-through rate prediction, recommendation, fraud detection, etc.
\begin{itemize}[leftmargin=*]
\item \textbf{AVS (acquire-valued-shoppers)} \cite{acquire-valued-shoppers-challenge}: The Acquire-valued-shoppers dataset is to predict which shoppers are most likely to repeat purchases. It has provided complete, basket-level, pre-offer shopping transaction history for a large set of shoppers who were targeted for an acquisition campaign. The incentive offered to that shopper and their post-incentive behavior is also provided.
\item \textbf{Outbrain} \cite{outbrain-click-prediction}: Outbrain is a platform that pairs relevant advertisements to curious readers on the web pages they would visit. In this dataset, the goal is to predict which advertisement a user will click on, given that the user is browsing a web page (called a \emph{display}). The user behaviors and the details of the advertisements are also provided. %
\item \textbf{Diginetica} \cite{diginetica}: This dataset is provided by DIGINETICA and its partners containing anonymized search and browsing logs, product data, and anonymized transactions. The goal is to predict which products will be clicked among all the return results of one search query according to the personal shopping, search, and browsing preferences of the users. 
\item \textbf{KDD15} \cite{KDD15}: This dataset is collected by XuetangX, one of the largest MOOC platforms in China. The goal of this dataset is to predict whether a user will drop a course within the next 10 days based on his or her prior activities.
\end{itemize}

\subsubsection{Data Preprocessing}

For all datasets, we preprocess datetime columns to four categorical columns: year, month, day, and day of week.  During prediction, we also drop the primary key and foreign key columns to ensure that the model does not memorize the labels based on IDs.  The statistics of the resulting datasets are shown in Table~\ref{Dataset Information}. The preprocessing code will be released together with our model implementation.

\subsubsection{Evaluation Metrics}
To quantitatively evaluate the model performances, we use the area under ROC curve (\textit{AUC}), which is widely used for binary classification tasks.

\subsubsection{Baselines}
We compare our method against two categories of approaches.

The first category involves offline methods that operate on relational databases by consolidating the data into a single table, followed by the application of single-table models. Other than the most important open-source method DFS, we compared our method against the following:
\begin{itemize}[leftmargin=*]
    \item \textbf{Target-table Only (TT)}.  We only use the target table for prediction.
    \item \textbf{Simple join (SJ)}.  If table $X^A$ has a forward relationship to table $X^B$, it is possible to join the two tables by simply appending the columns from $X^B$ to $X^A$. For instance, in Fig. \ref{fig:AVS}, the History table has a forward relationship to the Offer table, so we can join the two tables by appending the columns from the Offer table to the History table. This merging process is equivalent to performing a left merge on the Offer table with the History table based on the offer ID. This can be extended to recursive joining of further forward relationships with multiple levels. For example, if target table $X^A$ has a forward relationship to table $X^B$, and table $X^B$ has a forward relationship to table $X^C$, then we can join table $X^C$ to $X^B$ and subsequently join table $X^B$ to $X^A$. This recursive join method gathers information from as many tables as possible without aggregation.  We refer to this as simple join.
\end{itemize}

The second category involves RDB2Graph, which embeds each row into a node embedding vector and applies Graph Neural Network (GNN) models directly on the top. We evaluated our method against RDB2Graph using popular GNN models as the backbone, including RGCN \cite{schlichtkrull2018modeling}, Relational GAT \cite{busbridge2019relational}, and HGT \cite{hu2020heterogeneous}, the last considered state-of-the-art for heterogeneous graphs.

Although in Section~\ref{sec:related_works} we mentioned 
more methods such as OneBM\cite{lam2017one}, R2N\cite{lam2018neural}, ARDA\cite{arda}, AutoFeature\cite{liu2022feature}, METAM \cite{galhotra2023metam} and ATJ-Net\cite{bai2021atj},  they are either proprietary or not having their source code released, so we cannot compare with them in the experiments.

In both \model{} and the first category of methods, we evaluate DeepFM \cite{guo2017deepfm} and FT-transformer \cite{gorishniy2021revisiting} as the base model and node embedding function choices 
Since our framework can plug in any differentiable model designed for single table settings, we just use these two powerful and representative single table models as evaluation, and any other differentiable mentioned in Section~\ref{sec:related_works} can be simply merged into our framework in practical.

\subsubsection{Parameter Settings}

\begin{table*}[ht]
    \caption{AUC results of real-world datasets.
    }
    \centering
    \renewcommand\arraystretch{1.25}
    \begin{tabular}{l|c|c|c|c}
        \toprule
        Model & \textbf{AVS} & \textbf{Outbrain} & \textbf{Diginetica} & \textbf{KDD15} \\
        \midrule
        Search Depth or \#GNN layers & 2 & 4 & 3 & 2 \\
        \midrule
        TT + DeepFM & 0.6737 $\pm$ 0.0003 & -                   & -                   & -            \\
        TT + FT-Transformer   & 0.6720 $\pm$ 0.0007 & -                   & -                   & -  \\
        SJ + DeepFM & 0.6902 $\pm$ 0.0003 & 0.7223 $\pm$ 0.0008 & 0.6278 $\pm$ 0.0074 & 0.6297 $\pm$ 0.0028 \\
        SJ + FT-Transformer   & 0.6894 $\pm$ 0.0005 & 0.7188 $\pm$ 0.0014 & 0.6100 $\pm$ 0.0089 & 0.6061 $\pm$ 0.0065\\
        \midrule
        RDB2Graph + RGCN & 0.6956 $\pm$ 0.0007 & 0.7420 $\pm$ 0.0002 & 0.7420 $\pm$ 0.0184 & 0.8557 $\pm$ 0.0025\\
        RDB2Graph + GAT  & 0.6978 $\pm$ 0.0006 & 0.7440 $\pm$ 0.0003 & 0.7565 $\pm$ 0.0189 & 0.8629 $\pm$ 0.0026\\
        RDB2Graph + HGT  & 0.6957 $\pm$ 0.0006 & 0.7549 $\pm$ 0.0002 \textsuperscript{*} & 0.8070 $\pm$ 0.0005 & 0.8719 $\pm$ 0.0010\\
        \midrule
        DFS + DeepFM & 0.6974 $\pm$ 0.0006 & 0.7337 $\pm$ 0.0008 & 0.7963 $\pm$ 0.0055 & 0.8717 $\pm$ 0.0011\\
        DFS + FT-Transformer   & 0.6916 $\pm$ 0.0007 & 0.7303 $\pm$ 0.0012 & 0.8024 $\pm$ 0.0018 & 0.8626 $\pm$ 0.0022\\
        \midrule
        \model{} (ours) & \textbf{0.7001} $\pm$ 0.0002 & \textbf{0.7556} $\pm$ 0.0008 & \textbf{0.8106} $\pm$ 0.0023 & \textbf{0.8781} $\pm$ 0.0011\\
        \bottomrule
        \multicolumn{5}{l}{\textsuperscript{*} RDB2Graph + HGT can only run up to 3 layers due to GPU memory constraints.} %
    \end{tabular}
    \label{tab:real_dataset}
\end{table*}

\begin{table}[ht]
    \caption{Training time (in hours) of different models. The result marked as * is run with search depth = 3 due to OOM problem; however, the other results in the same column are run using search depth = 4. And the results from \model{} are \model{} + DeepFM.}
    \centering
    \renewcommand\arraystretch{1.25}
    \begin{tabular}{l|c|c|c|c}
        \toprule %
        \multirow{1}{*}{Model} & \textbf{AVS} & \textbf{Outbrain} & \textbf{Diginetica} & \textbf{KDD15}\\
        \midrule
        RGCN & 0.25 & 4.36 & 2.37 & 0.56 \\
        GAT  & 0.33 & 6.00 & 1.76 & 0.51 \\
        HGT  & 1.17 & *8.31 & 4.83 & 2.45 \\
        \model{} & 0.60 & 8.60 & 3.16 & 0.39 \\
        \bottomrule
    \end{tabular}
    \label{tab:time_compare}
\end{table}

In our experiments, we use Weight \& Bias \cite{wandb} to automatically search for the best hyperparameter combination. %
We run 50 hyperparameter trials for all our experiments, with distribution $LogUniform(10^{-7}, 10^{-3})$ for learning rate and weight decay. For models that use FT-Transformer as the base model, we set the distribution for dropout probability to $Uniform(0, 0.3)$, while for other models, we set it to $Uniform(0, 0.2)$.  We set the hidden dimension $d$ to 16, and the node embedding size of RDB2Graph to 64.  After hyperparameter optimization, we select the one with the best performance on the validation set, and then we rerun the model five times using this hyperparameter combination to reduce the impact of randomness. We record the mean performance and the standard deviation on the test set based on these reruns. We set the search depth of DFS and \model{} as well as the number of layers in RDB2Graph to the largest value possible to fit computation into GPU memory.

\subsection{Performance Comparison}
\label{sec:performance-comparison}
\begin{table*}[ht]
    \caption{AUC results of \model{} with different row embedding functions.}
    \centering
    \renewcommand\arraystretch{1.25}
    \begin{tabular}{l|c|c|c|c}
        \toprule
        Model & \textbf{AVS} & \textbf{Outbrain} & \textbf{Diginetica} & \textbf{KDD15}\\
        \midrule
        \model{} ($N_\phi=\text{MLP},\pi_\omega=\text{DeepFM}$) & 0.6949 $\pm$ 0.0026 & 0.7540 $\pm$ 0.0005 & 0.8095 $\pm$ 0.0024 & 0.8660 $\pm$ 0.0029\\
        \model{} ($N_\phi=\text{DeepFM},\pi_\omega=\text{DeepFM}$)     & \textbf{0.7001} $\pm$ 0.0002 & \textbf{0.7556} $\pm$ 0.0008 & \textbf{0.8106} $\pm$ 0.0023 & \textbf{0.8781} $\pm$ 0.0011\\
        \bottomrule
    \end{tabular}
    \label{tab:row_embedding_ablation}
\end{table*}

\begin{table*}[ht]
    \caption{AUC results of \model{} with different base model.}
    \centering
    \renewcommand\arraystretch{1.25}
    \begin{tabular}{l|c|c|c|c}
        \toprule
        Model & \textbf{AVS} & \textbf{Outbrain} & \textbf{Diginetica} & \textbf{KDD15}\\
        \midrule
        \model{} with DeepFM & \textbf{0.7001} $\pm$ 0.0002 & \textbf{0.7556} $\pm$ 0.0008 & \textbf{0.8106} $\pm$ 0.0023 & \textbf{0.8781} $\pm$ 0.0011\\
        \model{} with FT-Transformer & 0.6936 $\pm$ 0.0010 & 0.7543 $\pm$ 0.0006 & 0.8082 $\pm$ 0.0012 & 0.8703 $\pm$ 0.0011 \\
        \bottomrule
    \end{tabular}
    \label{tab:base_model_ablation}
\end{table*}

The results are in Table~\ref{tab:real_dataset}.  We note that
TT results are not applicable on most datasets, as the target table in Outbrain, Diginetica and KDD15 contains only foreign keys and the label column; it is therefore meaningless for single-table models since the foreign key columns are dropped to avoid memorization of labels.

We also compare the training time for each model and all the experiments are run on g4dn.metal. We focused on comparing the \model{} framework with other GNN models, as these models employ online sampling, which involves aggregating information from other tables during the training process. On the other hand, baselines such as DFS and SJ adopt offline sampling, where information from other tables is aggregated to form a single target table prior to model training, so these methods cannot be fairly compared. The training time results, specifically for \model{} combined with DeepFM, are presented in Table \ref{tab:time_compare}.

From the real-world dataset results, we have the following observation and analysis.
\begin{itemize}[leftmargin=*]
    \item \model{} consistently outperforms other baselines or achieves similar results to the best models across all four datasets. In contrast, other models may fail to perform well in certain datasets. Specifically, HGT performs well in Outbrain but fails in AVS. DFS, RGCN and GAT do not perform well across all datasets.
    \item While HGT may appear to be the most powerful baseline, it is important to consider the training time when comparing with \model{}. When comparing the training time of HGT and \model{}, we find that \model{} combined with DeepFM is more efficient than HGT throughout the entire training process. In fact, \model{} requires only half to one-third of the training time per epoch compared to HGT in most datasets. The similar training time observed in the Outbrain dataset is due to the fact that HGT utilizes a search depth of 3, while other models employ a search depth of 4.
\end{itemize}

\subsection{Ablation Study}

\begin{figure*}
    \centering
    \includegraphics[width=\linewidth]{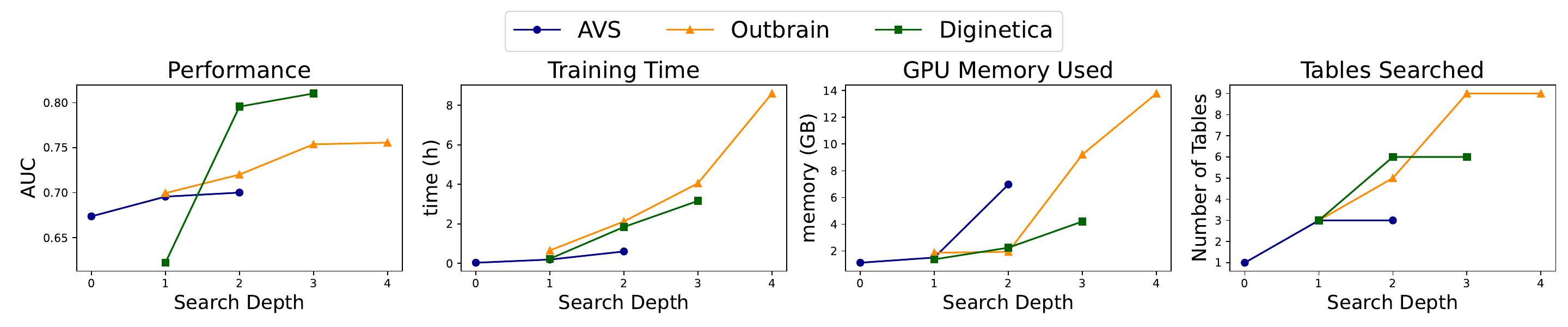}
    \caption{The performance, training time, GPU memory used, and number of tables searched of \model{} at different search depths in three different datasets.}
    \label{fig:search depth}
\end{figure*}

To gain a deeper understanding of the design rationale behind our proposed framework, \model{}, we conducted several ablation experiments.
\subsubsection{Row Embedding Function}
In our experiments we use the same architecture for base model $\pi_\omega$ and row embedding function $N_\phi$.  While the choice of base model $\pi_\omega$ is straightforward as \model{} can enjoy the capacity of a strong single-table model, using DeepFM and FT-Transformer for $N_\phi$ seems redundant at first glance.  So we compared our model against an alternative where we use an MLP for $N_\phi$ while still using DeepFM as base model.  The results are in Table~\ref{tab:row_embedding_ablation}.

We can have several observations and analyses from the results.
\begin{itemize}[leftmargin=*]
    \item The importance of a powerful row embedding function becomes evident in the AVS and KDD15 datasets, as there is a significant drop in performance when we replace the previous row embedding function with a simple MLP.
    This indicates that the feature interactions from DeepFM and FT-Transformer are necessary for the tables other than the target table as well.
    \item In the Outbrain and Diginetica datasets, the row embedding function appears to have less impact on performance. This observation can be attributed to the fact that the prediction of the target column relies more on the relationships between rows in different tables rather than the interaction of row features. This hypothesis aligns well with the results obtained from GNN models, where MLP is used as the embedding function to form node embedding vectors. Notably, the performance of the most powerful graph model, HGT, is similar to that of \model{}, supporting the hypothesis.
\end{itemize}

\subsubsection{Base Model}
We experimented with different base models for \model{}, and it is essential to note that the row embedding function is also adjusted accordingly based on the chosen base model. The results can be seen in Table \ref{tab:base_model_ablation}, where we find that DeepFM demonstrates superior performance across all the datasets.  This is perhaps unsurprising given that DFS+DeepFM is better than DFS+FT-Transformer on 3 of 4 datasets in Table \ref{tab:real_dataset}.  We also remark that while DeepFM may perform best for now, any strong differentiable tabular model introduced in the future could potentially serve as an effective replacement. 

\subsection{Effect of Search Depth}
Search depth is an important architectural parameter of \model{} when applied to real-world datasets. In Fig. \ref{fig:search depth}, we present the \textit{Performance}, \textit{Training Time}, \textit{GPU Memory Used}, and \textit{Number of Tables Searched} by \model{} at different search depths in the \textit{AVS}, \textit{Outbrain}, and \textit{Diginetica} datasets for larger search depth range. We gradually increase the search depth until we exhaust the GPU memory. We do not report the results for \textit{Outbrain} and \textit{Diginetica} at search depth 0 as it is meaningless, similar to why TT is meaningless as explained in Section~\ref{sec:performance-comparison}. It is observed that the Training Time and GPU Memory Used tend to increase rapidly, while the performance of \model{} may reach a bottleneck. Therefore, in real-world applications of \model{}, selecting an appropriate search depth becomes an important tradeoff between performance and resource utilization (i.e., training time and GPU memory used).

\section{Conclusion}\label{sec:conclusion}
In this paper, we introduce \model{}, a novel framework specifically designed for general \emph{column prediction} tasks on relational databases. \model{} is an embedding update and prediction framework that can plug in any differentiable model designed for single table settings as base model or embedding function to finalize the specific model. \model{} also improves upon the previous methods DFS and RDB2Graph and and tackles the inherent problems in previous methods. We conduct comprehensive experiments on real-world datasets to evaluate the performance of \model{} against other baseline methods. The results demonstrate that \model{} consistently outperforms the baselines and exhibits superior efficiency compared to the most powerful and complex GNN baseline, HGT. The successful implementation and evaluation of \model{} highlight its potential as an effective and efficient solution for machine learning tasks on relational databases. 

\begin{acks}
We thank the support from Shanghai Municipal Science and Technology Major Project (2021SHZDZX0102) and National Natural Science Foundation of China (62177033), as well as computation resource support from AWS. 
\end{acks}

\bibliographystyle{ACM-Reference-Format}
\bibliography{reference}

\end{document}